\crefname{section}{Sec.}{Sec.}
\Crefname{section}{Sec.}{Sec.}
\newcommand{\gray}{\color[rgb]{0.5,0.5,0.5}}
\newcommand{\red}{\color[rgb]{1,0,0}}
\newcommand{\Y}{\mathcal{Y}}
\newcommand{\V}{\mathcal{V}}
\newcommand{\E}{\mathcal{E}}
\newcommand{\G}{\mathcal{G}}
\renewcommand*{\paragraph}[1]{\par\noindent{\normalsize\bf #1}\,\xspace}
\newcommand{\revisit}[1][]{%
\ifthenelse{\equal{#1}{}}{
\ensuremath{\red \triangle}\xspace}{%
{\ensuremath{\red \rhd}\xspace}%
{\gray #1}%
{\ensuremath{\red \lhd}\xspace}%
}%
}
\def\mathrlap{\mathpalette\mathrlapinternal} 
\def\mathllap{\mathpalette\mathllapinternal}
\def\mathllapinternal#1#2{\llap{$\mathsurround=0pt#1{#2}$}}
\def\mathrlapinternal#1#2{\rlap{$\mathsurround=0pt#1{#2}$}}
\def\leftbb{\mathrlap{[}\hskip1.3pt[}
\def\rightbb{]\hskip1.36pt\mathllap{]}}
\theoremstyle{definition}
\newtheorem{definition}{Definition}[]
\newcommand{\floor}[1]{\lfloor #1 \rfloor}
\DeclareMathOperator*\argmin{arg\,min}
\DeclareMathOperator*\nb{Nb}
\DeclareMathOperator*\MP{\texttt{MPLP}}
\DeclareMathOperator*\HS{\texttt{HS}}
\DeclareMathOperator*\DP{\texttt{DP}}
\DeclareMathOperator*\rDP{\texttt{rDP}}
\newcommand\BR{\mathbb R}
\newcommand\SG{\mathcal G}
\newcommand\SV{\mathcal V}
\newcommand\SE{\mathcal E}
\newcommand\SC{\mathcal C}
\newcommand\SY{\mathcal Y}
\newcommand\SQ{\mathcal Q}
\newcommand\MSD{\texttt{MSD}\xspace}
\newcommand\TRWS{\texttt{TRW-S}\xspace}
\newcommand\SRMP{\texttt{SRMP}\xspace}
\newcommand\CMP{\texttt{CMP}\xspace}
\newcommand\MPLP{\texttt{MPLP}\xspace}
\newcommand\MPLPPP{\texttt{MPLP++}\xspace}
\newcommand\DMM{\texttt{DMM}\xspace}
\newcommand\SPAM{\texttt{SPAM}\xspace}
\newcommand\TBCA{\texttt{TBCA}\xspace}
\newcommand\TBCAPP{\texttt{TBCA++}\xspace}
\newcommand\tDP{\texttt{DP}\xspace}
\newcommand\trDP{\texttt{rDP}\xspace}
\newcommand\HM{\texttt{HM}\xspace}
\DeclareMathAlphabet\mathbfcal{OMS}{cmsy}{b}{n}
\begin{document}

%

%
\runningauthor{Tourani, Shekhovtsov, Rother, Savchynskyy}

\twocolumn[

\aistatstitle{Taxonomy of Dual Block-Coordinate Ascent Methods for Discrete Energy Minimization}

\aistatsauthor{ Siddharth Tourani$^1$ \And Alexander Shekhovtsov$^2$ \And  Carsten Rother$^1$ \And Bogdan Savchynskyy$^1$ }

\aistatsaddress{$^1$University of Heidelberg, Germany \And $^2$Czech Technical University in Prague} ]

\begin{abstract}
We consider the maximum-a-posteriori inference problem in discrete graphical models and study solvers based on the dual block-coordinate ascent rule. We map all existing solvers in a single framework, allowing for a better understanding of their design principles. We theoretically show that some block-optimizing updates are sub-optimal and how to strictly improve them. On a wide range of problem instances of varying graph connectivity, we study the performance of existing solvers as well as new variants that can be obtained within the framework. As a result of this exploration we build a new state-of-the art solver, performing uniformly better on the whole range of test instances.

\end{abstract}

\section{Introduction}

Discrete graphical models, one of the most sound and powerful frameworks in computer vision and machine learning, is still used in many applications in the era of CNNs. Graphical models effectively encode domain specific prior information in the form of a structured cost function, which is often hard to learn from data directly. With an increase in parallelization, fast dual block-coordinate ascent algorithms (BCA) have been developed that allow their application \eg in stereo~\cite{Discrete-Continuous-16}, optical flow~\cite{munda-17-flow}, 6D pose estimation~\cite{Tourani_2018_ECCV}. Combined and jointly trained with CNNs they can create more powerful models~\cite{chen2015learning,knobelreiter2017end}. They can also provide efficient regularization for training of CNN models~\cite{KolesnikovL16,Marin_2019_CVPR,Tosi_2019_CVPR}. Applications where structural constraints must be fulfilled (\eg\ \cite{payer2016automated}) or the optimality is required also significantly benefit from fast computation of good lower bounds by such methods~\cite{savchynskyy2013global, Haller18}.

In this work we systematically review the existing BCA methods. Despite being developed for different dual decompositions, they can be equivalently formulated as BCA methods on the same dual problem. We contribute a theoretical analysis showing which block updates are sub-optimal and can be improved. We perform an experimental study on a corpus of very diverse problem instances to discover important properties relevant to algorithm design. Such as, which types of variable updates are more efficient, or whether a dynamic or static strategy  in sub-problem selection is better, \etc. One observation that we made is that there is currently no single algorithm that would work well for both, sparse and dense problems. With this new comparison and theoretical insights, we synthesize a novel BCA method, that selects subproblems automatically adapting to the given graph structure. 
It applies the type of updates that are more expensive but which turn out to be more efficient and performs universally better across the whole range of the problems in the datasets we used.
\begin{table*}[!t]
\resizebox{\linewidth}{!}{
\small
\setlength{\tabcolsep}{2pt}
\renewcommand*{\arraystretch}{1.1}
\begin{tabular}{|l|l|l|l|}
\multicolumn{1}{c}{\bf \scriptsize Abbreviation} &
\multicolumn{1}{c}{\bf \scriptsize Authors} &
\multicolumn{1}{c}{\bf \scriptsize Method Name} &
\multicolumn{1}{c}{\bf \scriptsize Type of blocks / updates} \\
 \hline
\MSD & \citet{schlesingera2011diffusion} & Min-Sum Diffusion & \multirow{2}{*}{Node-adjacent, isotropic}\\
\CMP &\citet[Alg.5]{hazan2010norm} & Convex Max Product & \\
\hline
\TRWS & \citet{kolmogorov2006convergent} & Tree-Reweighted Message Passing & \multirow{2}{*}{Node-adjacent, anisotropic}\\
\SRMP &\citet{kolmogorov2015new} & Sequential Reweighted Message Passing & \\
\hline
\MPLP &\citet{NIPS2007_3200} & \multirow{2}{*}{Max-Product Linear Programming} & \multirow{2}{*}{Edges} \\
\MPLPPP &\citet{Tourani_2018_ECCV} & & \\
\hline
\DMM &\citet{Discrete-Continuous-16} & Dual Minorize-Maximize & Chains, hierarchical\\
\hline
\multirow{2}{*}{\TBCA} & \citet{sontag2009tree} & Tree Block Coordinate Ascent & Trees, sequential\\
  & \citet{tarlow2011dynamic} & Dynamic Tree Block Coordinate Ascent & Dynamic trees, sequential \\
\hline
\end{tabular}
}%
\caption{Surveyed block-coordinate ascent algorithms.\label{table:algs}}
\end{table*}
\subsection{Related Work}
Inference in graphical models is a well-known NP-hard problem. A number of solvers with different time complexities and guarantees, utilized in different applications, is surveyed in~\cite{kappes-2015-ijcv,Li2016a}. The linear-programming approach and the large family of associated methods is well covered in~\cite{werner2007linear,bogdan2019discretebook}. 
In this work we focus on BCA methods, which appear to offer the best lower bounds with a limited time budget for pairwise models with general pairwise interactions. These methods can be used to obtain fast approximate solutions directly, or to efficiently reduce the full combinatorial search~\cite{savchynskyy2013global, Haller18}.
Many BCA methods have been proposed to date and we selected in~\cref{table:algs} a mostly complete and representative list of the state-of-the-art BCA algorithms. Some of these methods were originally obtained for different dual formulations, based on the decompositions into larger subproblems (\TRWS, \TBCA, \DMM). Although, it is known that these duals are equivalent in the optimum~\cite{wainwright2005map,bogdan2019discretebook}, it has been believed that optimizing a stronger dual can be more efficient. 
Works~\cite{meltzer2005globally,ruozzi2013message} proposed a unified view of several MAP and sum-product algorithms as BCA methods. However, the dual objectives were different per method (derived from different region graphs~\cite{meltzer2005globally}, resp. splittings~\cite{ruozzi2013message}) and the algorithms operate with messages and beliefs. We consider a single dual for all methods, following the more recent understanding of \TRWS~\cite{kolmogorov2015new}, and all algorithms are explicitly updating the same dual variables.	

We study the issue of non-uniqueness of the block maximizers in BCA methods and their influence on the overall algorithmic efficiency.  \citet{Tourani_2018_ECCV} shows that \MPLP method can be significantly improved by a small modification in the choice of block maximizers. We generalize these results to chain and tree subproblems. \citet{Prusa-Werner-arXiv-relint} study the effect on fixed points.

Our code is available at \url{https://gitlab.com/tourani.siddharth/spam-code}. Proofs of all mathematical statements can be found in the appendix.

\section{MAP Inference with BCA}\label{sec:prelims}

\paragraph{MAP-Inference Problem}
Let $\SG=(\SV,\SE)$ be an undirected graph with the {\em node} set $\SV$ and {\em edge} set $\SE$. 
A {\em labeling} $y \colon \SV \to \SY$ assigns to each node $u\in \SV$ a discrete {\em label} $y_u \in \Y$, where $\SY$ is some finite set of labels, \Wlog assumed the same for all nodes. For brevity we will denote edges $\{u,v\}\in\SE$ as just $uv$.

For each node $u\in\SV$ and edge $uv\in\SE$ there are associated the following local cost functions: $\theta_u(s) \geq 0$ is the cost of a label $s \in \Y$ and $\theta_{uv} (s,t) \geq 0$ is the cost of a label pair $(s,t) \in \Y^2$, where the non-negativity is assumed w.l.o.g. Let also $\nb(u)$ denote the set of neighbors of node $u$ in $\SG$. 

In the well-known paradigm of MRF / CRF models, the posterior probability distribution is defined via the {\em energy} $E(y)$ as $p(y) \propto \exp(-E(y))$ and the {\em maximum a posteriori (MAP) inference} problem becomes equivalent to finding a labeling which minimizes the energy (total labeling cost): $y^*=$
\begin{equation}\label{equ:energy-min}
\textstyle
 \argmin\limits_{y\in\Y^\V} \Big[ E(y \mid \theta):=\sum\limits_{v\in\SV}\theta_v(y_v)+\sum\limits_{uv\in\SE}\theta_{uv}(y_{uv})\Big].
\end{equation}
\vskip-0.5\baselineskip
\paragraph{Reparametrizations}
The representation of the energy function $E(y \mid \theta)$ as the sum of unary and pairwise costs is not unique: there exist many cost vectors $\theta'$ such that $E(y \mid \theta)=E(y \mid \theta')$ for all labelings $y\in\SY^\SV$. Such cost vectors are called {\em equivalent}. 
All cost vectors equivalent to $\theta$ can be obtained as (\eg,~\cite{werner2007linear}):
\begin{align}\label{equ:reparametrization}
\textstyle
 \textstyle \theta^\phi_u(s) & \textstyle = \theta_u(s)-\sum_{v\in\nb(u)}\phi_{u,v}(s),\\
 \textstyle \theta^\phi_{uv}(s,t) & \textstyle = \theta_{uv}(s,t) + \phi_{u,v}(s)+\phi_{v,u}(t)\, \nonumber
\end{align}
with some \emph{reparametrization} vector $\phi = (\phi_{u,v}(s)\in\BR \mid u\in\SV,\ v\in\nb(u),\ s\in\SY)$. This reparametrization is illustrated in~\cref{fig:1}(a). It is straightforward to see that when substituting~\eqref{equ:reparametrization} into~\eqref{equ:energy-min} all contributions from $\phi$ cancel out and thus any reparametrized $\theta^\phi$ is equivalent to $\theta$ (for the converse, that all equivalent costs do have such a representation see~\cite{werner2007linear}).

\paragraph{Dual Problem}
The basic idea, pioneered in pattern recognition by~\cite{schlesinger1976syntactic}, is the following. In practice there exist oftentimes a reparametrization with the property that by selecting the label in each node independently as $y_u \in \argmin \theta^\phi_u$ a good, or even optimal, solution is recovered.

From an optimization perspective, this is captured by the lower bound: $D(\phi) :=$
\begin{align}\label{equ:Lagrange-dual}
\sum_{u\in\SV}\min_{s\in\SY}\theta^\phi_u(s) +\hskip-2pt\sum_{uv\in\SE}\min_{(s,t)\in\SY_{uv} \hskip-8pt}\theta^\phi_{uv}(s,t) \leq E(y^* \mid \theta),
\end{align}
obtained by applying the reparametrization in~\eqref{equ:energy-min} and using the min-sum swap inequality. If there is a reparametrization such that the lower bound is tight and the minimizer $y_u \in \argmin \theta^\phi_u$ in each node is unique, then $y$ is the unique global optimum of~\eqref{equ:energy-min}. To tighten the lower bound we seek to maximize it in $\phi$.

It is known~(\cite{werner2007linear,bogdan2019discretebook}) that this maximization problem is dual to the natural linear programming relaxation of~\eqref{equ:energy-min}. 

The dual problem has the following advantages:
(i) it is constraint-free; (ii) it is composed of a sum of many simple concave terms, each of which is straightforward to optimize.

\paragraph{BCA algorithms} Block-coordinate ascent methods exploit the structure of the dual by iteratively maximizing it \wrt different blocks of variables (subset of coordinates of $\phi$) such that the block maximization can be solved exactly. Formally, let $\phi_F$ be the restriction of $\phi$ to a subset of its 
coordinates $F \subset \{(u,v,s) \mid u \in \V, v\in \nb(u), s \in \Y \}$, BCA algorithms perform the update:
\begin{align}\label{block-max}
\phi_{F} := \arg\max_{\phi_F} D(\phi)
\end{align}
with different blocks $F$ in a static or dynamic order.

\paragraph{Constrained Dual}
For the purpose of this work, it is convenient to work with the constrained dual:
\begin{align}\label{c-dual}
\textstyle
\max _{\phi} D(\phi) \ \  \mbox{s.t.\ \ } \theta^\phi \geq 0.
\end{align}
The equivalence can be shown by constructing for any solution $\phi$ to the unconstrained dual, a correction preserving the objective value and satisfying the constraints~\cite{bogdan2019discretebook}. We will formulate all BCA algorithms in this paper in a way that they maintain the feasibility to the constrained dual.

\begin{figure}[t]
\centering
\setlength{\tabcolsep}{10pt}
\begin{tabular}{cc}
\includegraphics[width=0.22\linewidth]{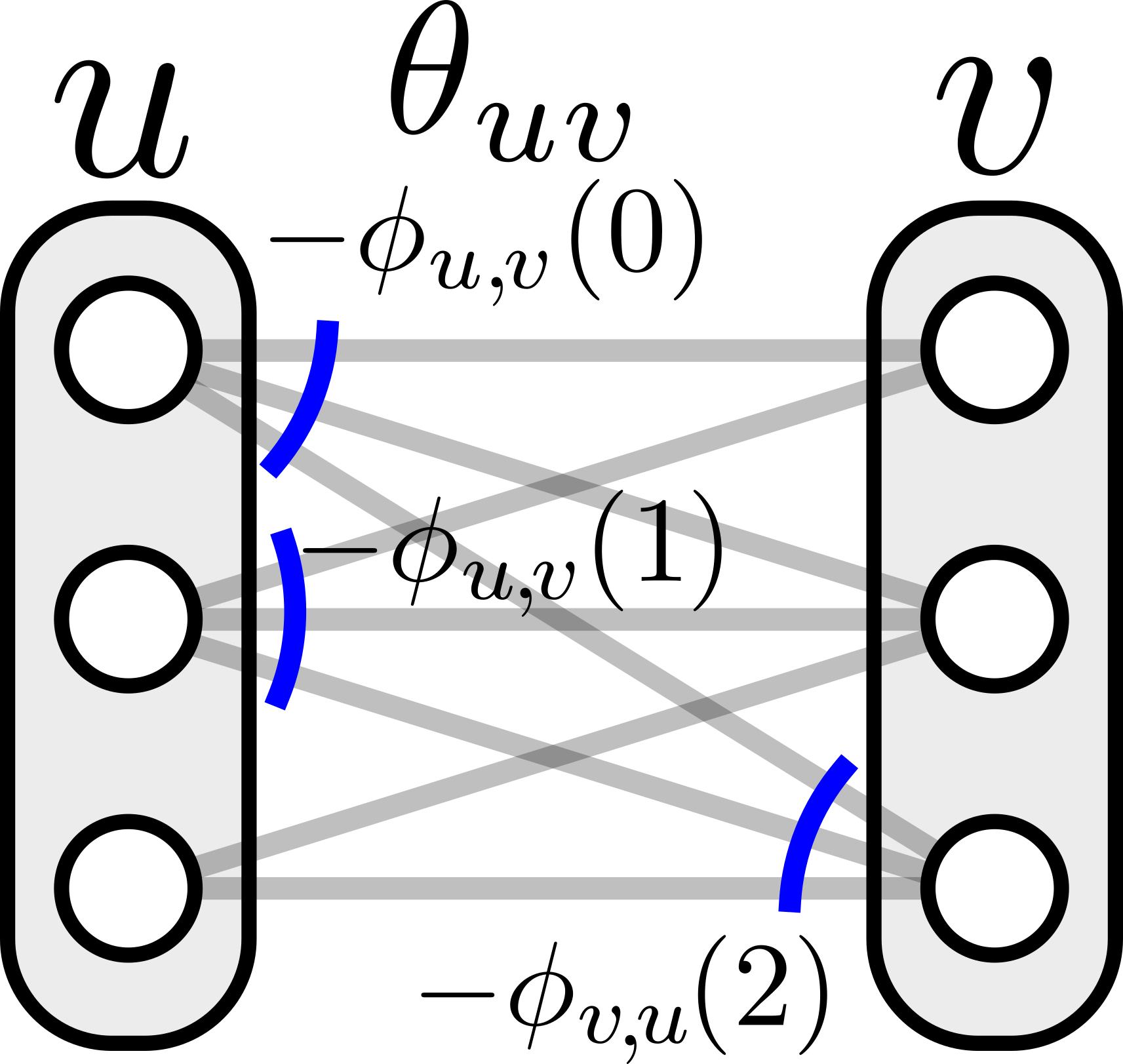}&
\includegraphics[width=0.22\linewidth]{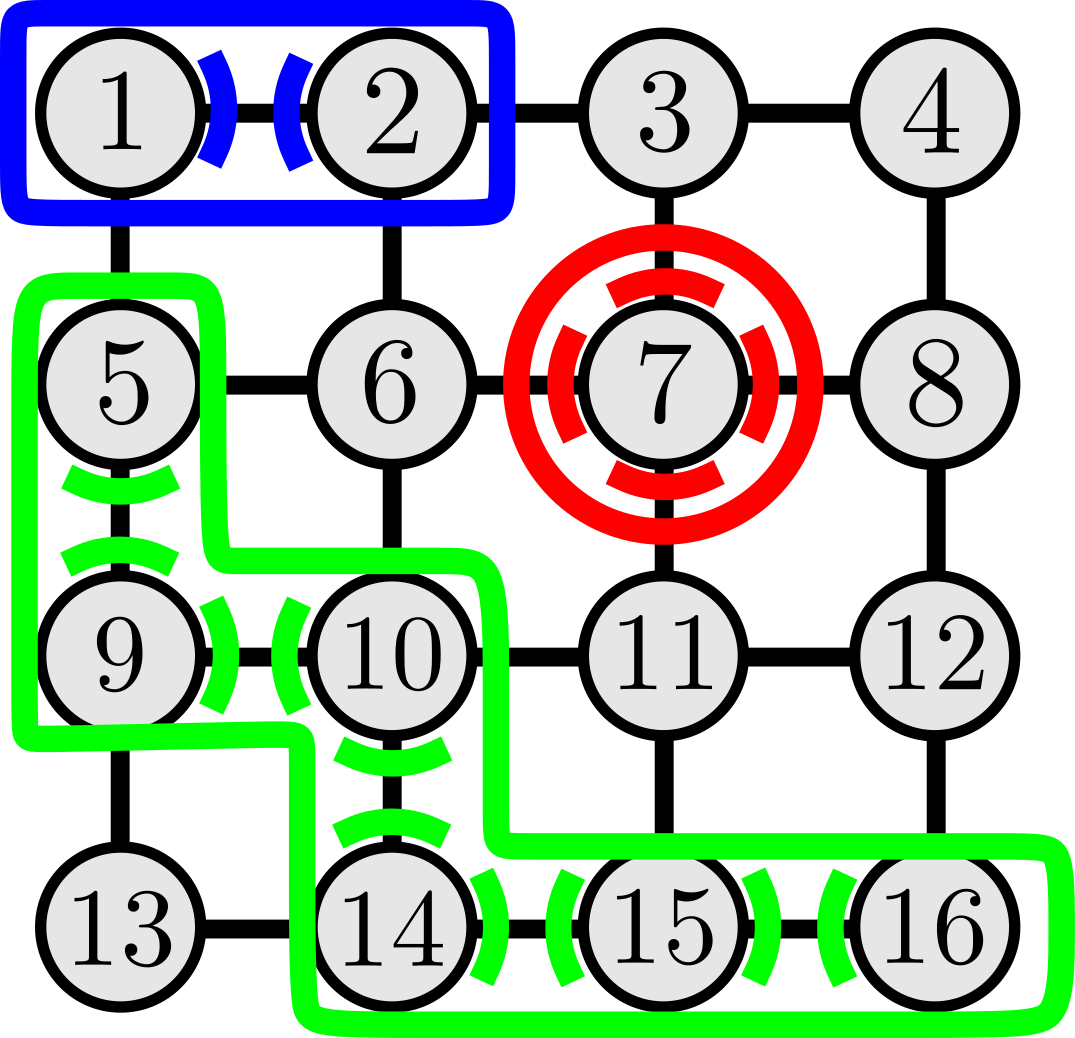}\\
(a) Dual Variables & (b) Dual Blocks
\end{tabular}
\caption{
(a) An edge block and its corresponding reparametrization components in the graphical notation of~\cite{werner2007linear,bogdan2019discretebook}: nodes $u, v$ are shown as grey ovals and circles representing possible labels. The lines connecting the labels represent label pairs $(s,t)$ with associated pairwise costs $\theta_{uv}(s,t)$. For each set of pairwise costs connected to a particular label, there is one reparametrization coordinate $\phi_{u,v}(s)$ shown by a blue arc.
(b) Different variable blocks. Highlighted are block sub-graphs and arcs indicating the variables considered. {\em Red:} node-adjacent block, {\em blue:} edge-block, {\em green} chain block.
\label{fig:1}
}
\end{figure}

\section{Taxonomy of BCA Methods}
We survey a number of BCA methods, listed in~\cref{table:algs}. Many of these methods are derived for different dual objectives and work with different sets of parameters. We reformulate them all as BCA methods on the dual~\eqref{c-dual} and identify the following important design components:

\begin{itemize}[leftmargin=*]
\item Type of blocks used. This has a significant impact on  algorithm efficiency. Larger blocks (such as chains or trees) lead to greater dual improvement, but optimizing over them requires more computations. 

\item Strategy of selecting which block to optimize at every step. A dynamic strategy may be more advantageous for some problems but has additional overhead costs.
\item Type of the update applied. This is not systematically studied in the literature. The maximizer for each block is non-unique but instead it is any point in the optimal facet. One may obtain algorithms with drastically different behaviour, depending on the choice of the maximizer.
\end{itemize}

\subsection{Choice of Variable Block}
BCA algorithms (\cref{table:algs}) exploit the following types of blocks that are tractable to be optimized over:
\begin{itemize}[leftmargin=*]
 \item \textbf{Node-adjacent blocks} $F_u = \{(u,v,s) \mid v\in\nb(u),\ s\in\SY\}$ consist of coordinates of the reparametrization vector that are ``adjacent'' to a node $u$,~\cref{fig:1}(b, red). These blocks are used in \TRWS, \MSD and \CMP algorithms. 
 \item \textbf{Edge blocks} $F_{uv} = \{(u,v,s), (v,u,s) \mid s \in \Y \}$ containing all variables associated with an edge $uv$, see~\cref{fig:1}(b, blue). These are used in \MPLP and \MPLPPP algorithms. 

 \item \textbf{Chains and Trees}
For a sub-graph $(\V', \E') \subset \G$ we select variables associated to all its edges: $F_{\E'} := \cup_{uv\in\E'}F_{uv}$, see~\cref{fig:1}(b, green). To optimize over such blocks, a dynamic programming subroutine is needed. Chain blocks are used \eg in \DMM (rows and columns of a grid graph). The \TRWS algorithm, which we introduced above as a node-adjacent BCA, simultaneously achieves optimality over a large collection of chains. Spanning trees are used in \TBCA variants. We call edge, chain and tree blocks collectively as {\em subgraph blocks}.
\end{itemize}
We will investigate which type of blocks and respective updates are more efficient.

\subsection{Static vs. Dynamic Blocks}

In dynamic \TBCA~\cite{tarlow2011dynamic} the trees are found dynamically by estimating where the dual can be increased the most (so-called local primal-dual gap~\cite{tarlow2011dynamic}), which showed a significant practical speed-up in some applications~\cite{tarlow2011dynamic}.
In other methods, the blocks are fixed in advance: \eg rows and columns for grid graphs in \DMM, single edge blocks in \MPLP, spanning trees, selected greedily to cover the graph, in the static \TBCA.
We will investigate static and dynamic strategies for several update types.
\subsection{Choice of The Local Maximizer}\label{sec:update-types}
With the same blocks one could get very different algorithms depending on how the block maximizer is selected from the polyhedron of possible optimizers, which we refer to as {\em update type}. We can systematize all used update types for node-adjacent blocks and subgraph-based blocks using several elementary operations. We now review them one by one.

\begin{figure}[!t]
\center
\begin{subfigure}[b]{0.5\linewidth}
\center
\includegraphics[height=0.25\linewidth]{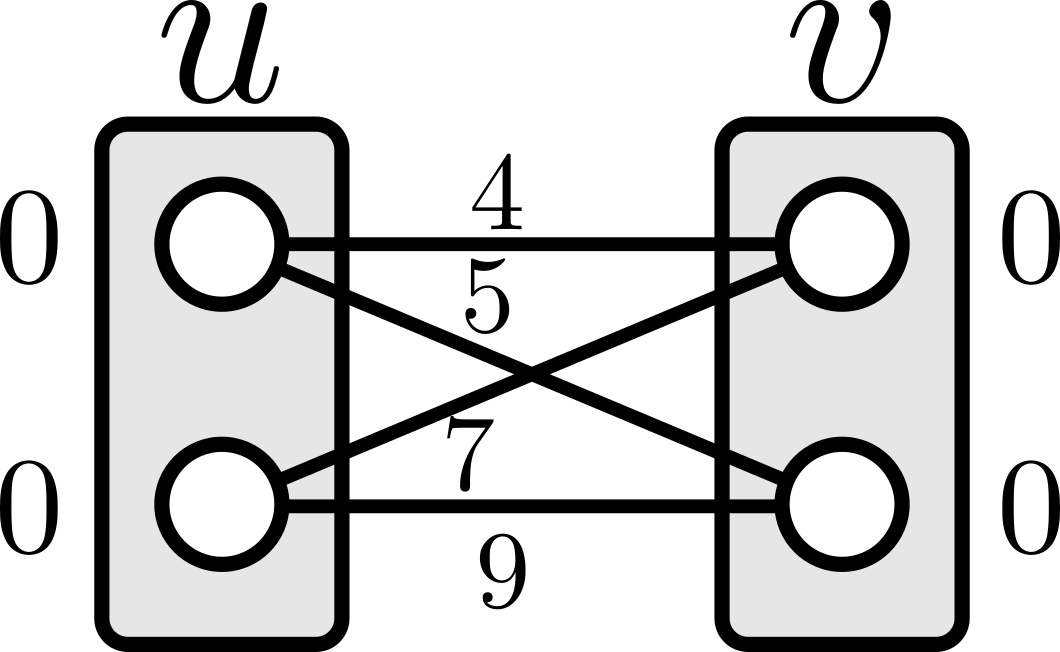}
\caption{Original Edge Block\label{fig:two-block}}
\end{subfigure}%
\begin{subfigure}[b]{0.5\linewidth}
\vspace{0.3cm}
\center
\includegraphics[height=0.25\linewidth]{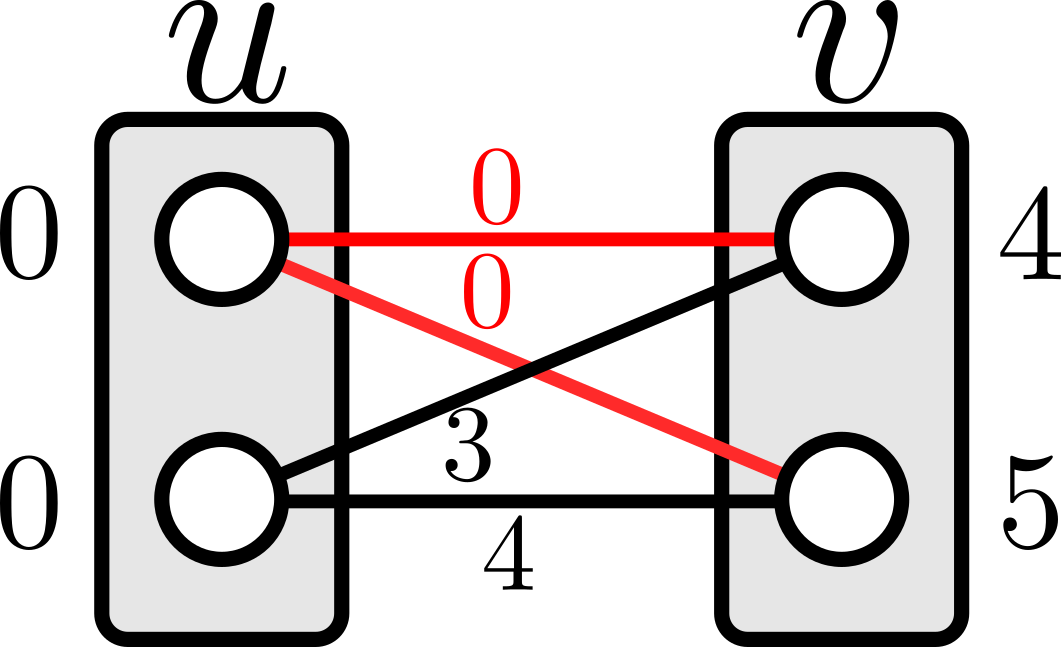}
\caption{Dynamic Programming\label{fig:two-block-dp}}
\end{subfigure}\\
\begin{subfigure}[b]{0.5\linewidth}
\center
\includegraphics[height=0.25\linewidth]{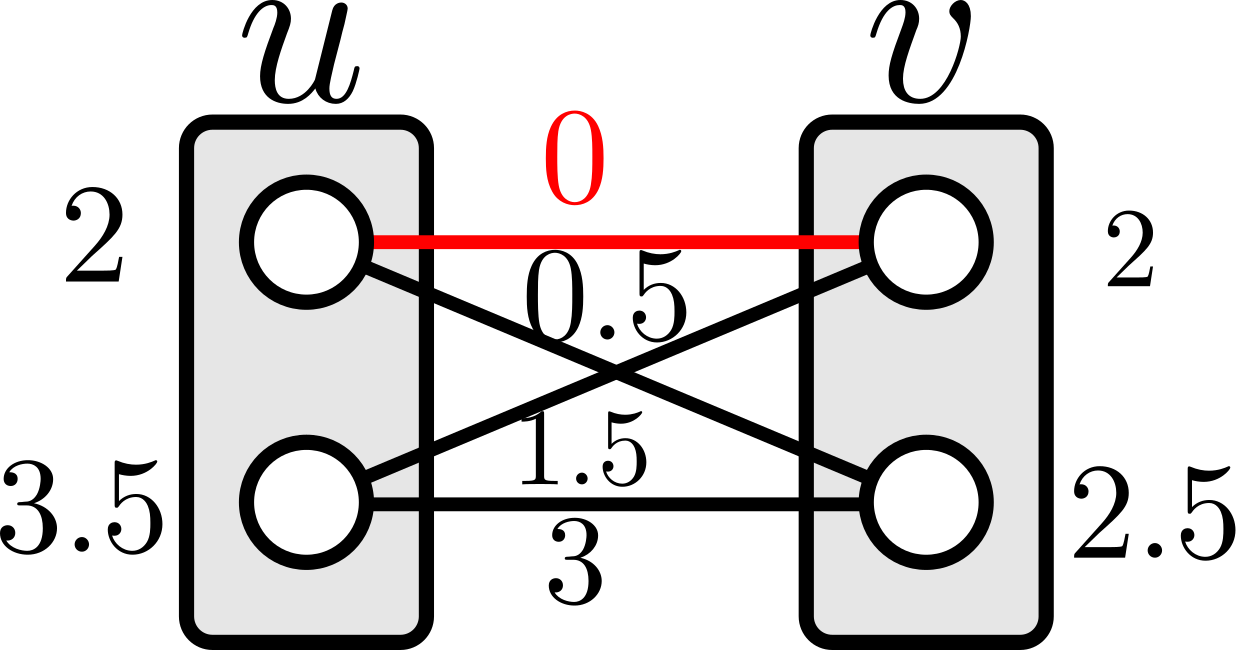}
\caption{\MPLP\label{fig:two-block-mplp}}
\end{subfigure}%
\begin{subfigure}[b]{0.5\linewidth}
\center
\includegraphics[height=0.25\linewidth]{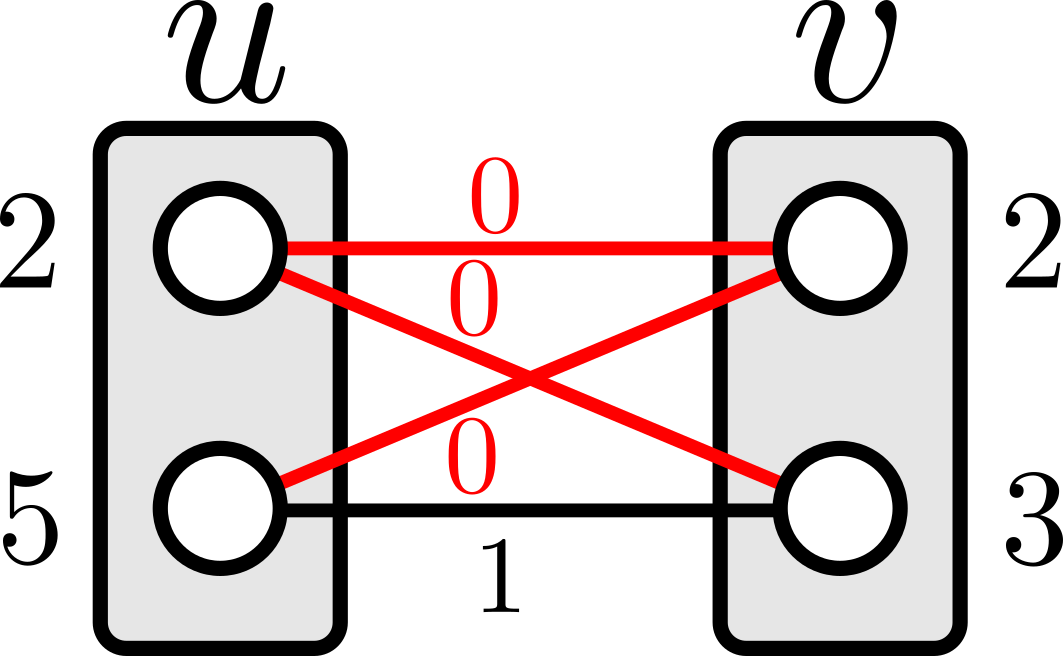}
\caption{{\tt Handshake} ($\HS$)\label{fig:two-block-mplppp}}
\end{subfigure}
\caption{
Example of edge redistribution operations and the difference between non-maximal and maximal minorants; (a) initial edge block with some pairwise costs only; (b) \tDP update; (c) result of \MPLP update; (d) result of {\tt Handshake} update. Observe that node costs created by {\tt Handshake} are strictly bigger than those of \MPLP and more pairwise costs are made zero.  \label{fig:two-block-minorant}}%
\end{figure}

\paragraph{Node-Adjacent Updates}
The update of blocks $F_u$ works in two operations performing {\em aggregation}~\eqref{node-adjacent-up} and {\em distribution}~\eqref{equ:node-adjacent-distribution} for every label $s$ of $u$:
\begin{align}\label{node-adjacent-up}
& \forall v\in\nb(u)\ \phi_{u,v}(s) := \phi_{u,v}(s) - \min_{l\in\SY}\theta^{\phi}_{uv}(s,l)\,,\\
& \forall v\in\nb(u)\ \phi_{u,v}(s) := \phi_{u,v}(s) + w_{u,v} \theta^{\phi}_{u}(s)\,, \label{equ:node-adjacent-distribution}
\end{align}
where coefficients $w_{u,v}$ are non-negative and satisfy $\sum_{v\in\nb(u)}w_{u,v}\le 1$. After the aggregation, the reparametrized costs $\theta^\phi_{uv}$ stay non-negative and label pairs that have zero cost are consistent with the minimizers of the unary reparametrized costs $\theta^\phi_{u}$. This step achieves block maximum~\eqref{block-max}.  The purpose of the distribution step is to redistribute the cost excesses back to the edges of the block (leaving a fraction $1 - \sum_{v\in\nb(u)}w_{u,v}$ at the node $u$) while preserving block optimality. In effect, the neighbouring nodes receive information about good labels for $u$.
The \MSD, \CMP, dynamic programming (\tDP) and \TRWS algorithms are obtained by the respective setting of weights:
\begin{align}
\textstyle
&\mbox{\MSD:\ } \textstyle w_{u,v}{=}\frac{1}{|\nb(u)|}, \
\mbox{\CMP:\ } \textstyle w_{u,v}{=}\frac{1}{|\nb(u)|+1},\\
&\mbox{\tDP:\ } \textstyle w_{u,v}{=}\leftbb v{>}u \rightbb,\ 
\notag \mbox{\TRWS:\ } \textstyle w_{u,v}{=}
\frac{\leftbb v{>}u \rightbb}{\max\{N_{\rm in}(u),\,N_{\rm out}(u)\}},
\end{align}
where $\leftbb \cdot\rightbb$ are Iverson brackets and the other details follow.
The \MSD and \CMP algorithms do not express any preferences in direction (are isotropic) and the order of updating blocks is not as important. %
Updates of \tDP and \TRWS are anisotropic and depend on the order of the vertices. Let us see how \tDP updates work. Consider a chain graph and the chain ordering of nodes. For an inner node $u$ there are two neighbouring nodes: $u\,{-}\,1$ and $u\,{+}\,1$. By choosing $w_{u, u+1} = 1$ and $w_{u, u-1} = 0$, we let all the excess costs  be pushed forward and implement the forward pass of the Viterbi algorithm. \TRWS considers some order of processing of the nodes and applies coefficients $w_{u,v}$ such that for $v<u$ it is zero and for $v>u$ the coefficients are distributed evenly based on the numbers $N_{\rm in}(u)$, $N_{\rm out}(u)$ of incoming and outgoing edges in $u$ \wrt the node order. Note that when there are more incoming edges than outgoing, these weights sum to less than one, \ie some cost excess is left at the node $u$. 
It is clear that the choice of the block update and the order may be crucial in BCA methods.

In contrast to node-adjacent blocks, \textbf{subgraph blocks} overlap only in the nodes of the graph. Therefore, different redistribution strategies have been proposed in order to make the excess costs visible in all nodes of the processed block.

\paragraph{Edge Updates}
\begin{subequations}\label{MPLP-agg}
\MPLP and \MPLPPP methods consider edge blocks. \MPLP performs the following symmetric update $\phi:=\MP_{u,v}(\theta, \phi)$:
\begin{align}
&\textstyle \forall s{\in}\SY, \quad \phi'_{u,v}(s) := \phi_{u,v}(s) + \theta^\phi_u(s), \label{MPLP-agg-a}\\ 
&\textstyle \forall t{\in}\SY, \quad \phi'_{v,u}(t):=\phi_{v,u}(t) + \theta^\phi_v(t);  \label{MPLP-agg-b}
\end{align}
\end{subequations}%
\vskip-\baselineskip
\begin{subequations}\label{MPLP-dist}
\begin{align}
&\textstyle \forall s{\in}\SY, \ \phi_{u,v}(s):=\phi'_{u,v}(s){-}\tfrac{1}{2}\min_{t \in\SY}\theta^{\phi'}_{uv}(s,t),\label{MPLP-dist-a}\\ 
&\textstyle \forall t{\in}\SY, \ \phi_{v,u}(t):=\phi'_{v,u}(t){-}\tfrac{1}{2}\min_{s\in\SY}\theta^{\phi'}_{uv}(s,t). \label{MPLP-dist-b} 
\end{align}
\end{subequations}
The {\em aggregation} step~\eqref{MPLP-agg} achieves that the costs in the nodes $u$, $v$ become aggregated in the edge $uv$. Reparametrized costs $\theta^{\phi'}_{u}$, $\theta^{\phi'}_{v}$ become zero and $\theta^{\phi'}_{uv}(s,t) = \theta^\phi_{uv}(s,t) + \theta^\phi_{u}(s) + \theta^\phi_{v}(t)$ does not depend on the initial reparametrization components $\phi_{uv}, \phi_{vu}$. At this point the maximum over the edge block is found. 

The {\em distribution} step~\eqref{MPLP-dist-a}-\eqref{MPLP-dist-b} divides the aggregated cost in two halves and pushes the excesses from each half back to two nodes $u$, $v$, to make the preferred solution for the edge visible in the nodes. See~\cref{fig:two-block-mplp}.

The {\tt Handshake} ($\HS$) update $\phi:=\HS_{u,v}(\theta, \phi)$ is used in \MPLPPP and \DMM.
It differs in the distribution step. Let $\phi_{u,v}(s)$ be computed as in \MPLP and $\phi_{v,u}(t)$ set to an arbitrary value. The {\tt Handshake} update additionally performs:
\begin{subequations}
\begin{align}
\forall t\in\SY, & \quad \phi_{v,u}(t) := \phi_{v,u}(t) - \min_{s \in\SY}\theta^{\phi}_{uv}(s,t),\label{equ:handshake-1}\\ 
\forall s\in\SY, & \quad \phi_{u,v}(s) := \phi_{u,v}(s) - \min_{t \in\SY}\theta^{\phi}_{uv}(s,t).\label{equ:handshake-2}
\end{align}
\end{subequations}
This step pushes the still remaining cost excess from the edge to the nodes as illustrated in~\cref{fig:two-block-mplppp}.
It leads to a strictly better improvement of the dual objective after the pass over all blocks and performs considerably better in experiments~\cite{Tourani_2018_ECCV}. The step~\eqref{equ:handshake-1} does not depend on the value of $\phi_{v,u}(t)$, which can be seen by moving $\phi_{v,u}(t)$ under the $\min$ and expanding the reparametrization. Therefore step~\eqref{MPLP-dist-b} may be omitted when computing $\HS$. 

\paragraph{Chain / Tree Updates}
The optimality over an edge, chain or a tree can be achieved by applying the following {\em dynamic programming} update $\phi:=\DP_{u,v}(\theta, \phi)$ (in the order of the chain or from leaves to the root of a tree):
\begin{subequations}
\begin{align}
\forall s \in \SY, & \quad \phi_{u,v}(s):= \phi_{u,v}(s) + \theta^{\phi}_{u}(s), \label{DP-1}
\end{align}
\begin{align}
\forall t \in \SY, & \quad \phi_{v,u}(t):= \phi_{v,u}(t) -\min_{s\in \SY}[\theta^\phi_{u,v}(s,t)]\,. \label{DP-2}
\end{align}
\end{subequations}
The step~\eqref{DP-1} aggregates the cost excess from node $u$ to the edge $uv$ and the step~\eqref{DP-2} pushes the cost excess to note $v$. Observe that it can be written in the form of a node-adjacent update~\eqref{node-adjacent-up}-\eqref{equ:node-adjacent-distribution} by grouping the push step~\eqref{DP-2} into $v$ with the aggregation step~\eqref{node-adjacent-up} at $v$ when processing the next edge $vw$.

\TBCA algorithm uses \tDP to achieve optimality over a tree and then performs a pass in the reverse order, redistributing the costs with the following \texttt{rDP} update.

{\em Redistribution DP update} $\phi:=\rDP_{u,v}(\theta, \phi)$
\begin{subequations}
\begin{align}
\forall s \in \SY, & \quad \phi_{u,v}(s):= \phi_{u,v}(s) + r \theta^{\phi}_{u}(s), \label{rDP-1} \\
\forall t \in \SY, & \quad \phi_{v,u}(t):= \phi_{v,u}(t) -\min_{s\in \SY}[\theta^\phi_{u,v}(s,t)], \label{rDP-2}
\end{align}
\end{subequations}
where $0 \le r \le 1$ is a constant similar to the weights in the node-adjacent updates. The fraction $r$ of cost excess is pushed forward to $v$ and the fraction $1-r$ is left in the node $u$. \TBCA detailed in~\cref{alg:tbca,fig:tbca-message-passing} redistributes cost excesses based on the size of the tree branch remaining ahead. \TBCA was originally proposed for the dual decomposition with trees~\cite{sontag2009tree} and works with its Lagrange multipliers.

\DMM  works with chain subproblems and performs the redistribution hierarchically as explained in~\cref{alg:dmm,fig:hm-message-passing}. 
It was also originally proposed for the dual decomposition with chains~\cite{Discrete-Continuous-16}.
One advantage of this method is that when the chain contains an edge with zero pairwise costs (no interactions), the processing becomes equivalent to redistribution in two chains independently. In contrast, the \TBCA method would be confused in its estimate of the size of the subtree to push the excess to.

\begin{figure}[t]
\centering
\includegraphics[width=0.5\linewidth]{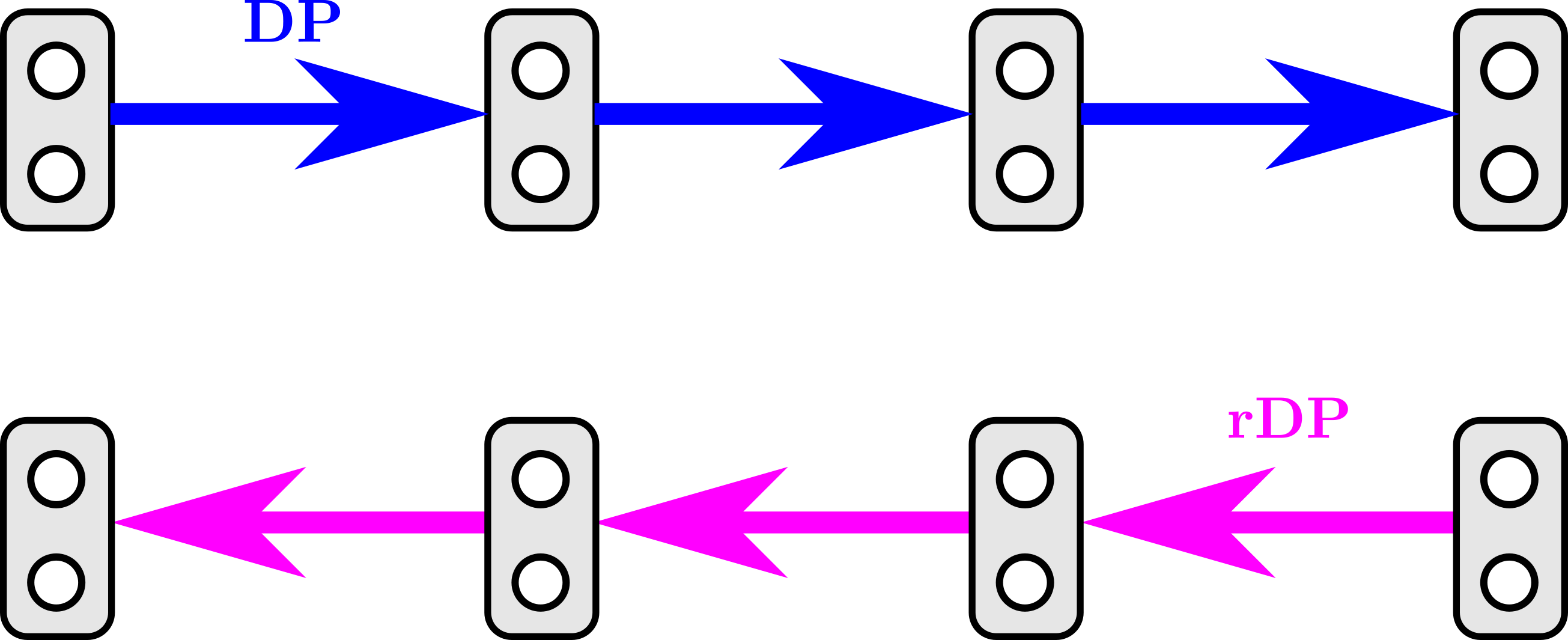}
\caption{\label{fig:tbca-message-passing}
Tree-BCA computation. Blue arrows represent the DP update of the edge in that direction. Pink arrows represent the rDP operations. The \TBCA update starts off by collecting costs at the end of the chain (right) and then redistributes with rDP with $r = (n-i)/n$ in the $i$'th backward step.}
\vskip-0.5\baselineskip
\begin{algorithm}[H]
\caption{Tree-BCA Update on a Chain}
\begin{algorithmic}[1]
\Require $\phi$ - starting reparametrization; 
$(\SV', \SE')$ - chain subgraph arranged as $\SV'=\{1,\dots,n\}$ and $\SE'=\{\{i,i+1\}\colon i=1,\dots,n-1\}$.
\For{$i=1,\dots,\text{n}-1$} \Comment{Collects the costs at the chain end with dynamic programming}
\State $(\phi_{i,i+1},\phi_{i+1,i}):=\DP_{i,i+1}(\theta, \phi)$ \label{alg:line:collect-tbca}
\EndFor 
\For{$i=n,\dots,2$} \Comment{Distributes the costs in reverse order to the nodes}
\State $(\phi_{i-1,i},\phi_{i,i-1}):=\rDP_{i+1,i}(\theta, \phi)$  with $r=\frac{n-i}{n}$ \label{alg:line:distribute-tbca}
\EndFor
\end{algorithmic}
\label{alg:tbca}
\end{algorithm}
\vskip-\baselineskip
\end{figure}

\section{Analysis}
Subgraph-based blocks usually overlap over the nodes only (horizontal / vertical chains) or have a small overlap over the edges (spanning trees). Consider two blocks that overlap over nodes only. The representation of the information (costs of different solutions) which is available to one block about the other is limited to the reparametrized node costs $\theta^\phi_{u}$ for all shared nodes $u$. We identify this reparametrized unary potentials with modular minorants~\cite{Discrete-Continuous-16}, having clear analogies with minorants/majorants in pseudo-Boolean optimization~\cite{BorosHammer02}.
\subsection{Modular Minorants} 

A function $f \colon \SY^n \to \BR$ of $n$ discrete variables is called {\em modular}, if it can be represented as a sum of $n$ functions of one variable: $f(y)=\sum_{i=1}^n f_i(y_i)$, $f_i\colon \SY\to\BR$. 
The function $f(y)=\sum_{u\in\SV'}\theta^{\phi}_u(y_u)$ is modular for any subset of nodes $\SV' \subseteq\SV$ and any reparametrization~$\phi$.
\begin{definition}\label{def:modular-minorant}
 A modular function $g$ is called a {\em (tight) minorant} of $f\colon \SY^n\to\BR$, if (i) $g(y)\leq f(y)$ asnd (ii) $\min_{y\in\SY^n}f(y) = \min_{y\in\SY^n}g(y)$.
\end{definition}

For the rest of this section we will assume that $\G' = (\V', \E')$ is a subgraph defining a block of variables optimized at one step of a BCA algorithm and $E_{\SG'}$ is the restriction of energy $E$ to graph $\G'$ with the reparametrized costs~$\theta^\phi$. A reparametrization $\phi$ is called {\em dual optimal on $\G'$}, if it is block-optimal in the sense of~\eqref{block-max} \wrt block $F_{\E'}$.
Minorants and dual optimal reparametrizations are closely related: 
\begin{restatable}{theorem}{Tminorant}\label{prop:minorant-property}
Let $\G'$ be a tree and $\sum_{uv\in\SE'}\min_{s,t}\theta^\phi_{uv}(s, t) =0$.
The function $g(y)=\sum_{u\in\SV'}\theta^{\phi}_{u}(y_u)$
is a minorant for the energy $E_{\SG'}(y)$ \emph{if and only if} $\phi$ is dual optimal on~$\G'$.
\end{restatable}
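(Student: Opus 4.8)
The plan is to observe that clause (i) of \cref{def:modular-minorant} holds automatically, so that the whole statement collapses to a characterization of clause (ii), which I will then identify with block-optimality through the restricted dual bound.

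First I would record the exact decomposition
\[
E_{\SG'}(y) \;=\; g(y) \;+\; \sum_{uv\in\SE'}\theta^\phi_{uv}(y_u,y_v).
\]
Since we operate in the constrained dual \eqref{c-dual} we have $\theta^\phi\ge 0$, so $E_{\SG'}(y)-g(y)$ is a sum of non-negative pairwise costs and therefore $g(y)\le E_{\SG'}(y)$ for every $y$. Thus clause (i) of \cref{def:modular-minorant} is satisfied for free, and the statement reduces to characterizing when clause (ii), $\min_y g(y)=\min_y E_{\SG'}(y)$, holds.

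Next I would evaluate both quantities in clause (ii). Because $g$ is modular, its minimum splits over the nodes, $\min_y g(y)=\sum_{u\in\SV'}\min_{s}\theta^\phi_u(s)$. Writing the restricted dual value
\[
D_{\SG'}(\phi)\;=\;\sum_{u\in\SV'}\min_{s}\theta^\phi_u(s)\;+\;\sum_{uv\in\SE'}\min_{s,t}\theta^\phi_{uv}(s,t),
\]
the hypothesis $\sum_{uv\in\SE'}\min_{s,t}\theta^\phi_{uv}(s,t)=0$ yields the key identity $\min_y g(y)=D_{\SG'}(\phi)$. Hence clause (ii) is equivalent to $D_{\SG'}(\phi)=\min_y E_{\SG'}(y)$. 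This is where the hypothesis is essential rather than cosmetic: a nonzero edge-min-sum would make $\min_y g$ strictly smaller than $D_{\SG'}(\phi)$ even at a block maximum, breaking the equivalence.

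Then I would connect this to dual optimality. Only the coordinates in $F_{\SE'}$ enter the $\SG'$-terms of $D$, so $\phi$ is dual optimal on $\SG'$ exactly when $D_{\SG'}(\phi)=\max_{\phi'}D_{\SG'}(\phi')$. Weak duality $D_{\SG'}(\phi)\le\min_y E_{\SG'}(y)$ holds for all $\phi$ by the min-sum swap inequality used in \eqref{equ:Lagrange-dual}, and on a tree this bound is tight: $\max_{\phi'}D_{\SG'}(\phi')=\min_y E_{\SG'}(y)$. Chaining the reductions gives $g$ a minorant $\iff$ clause (ii) $\iff$ $D_{\SG'}(\phi)=\min_y E_{\SG'}(y)$ $\iff$ $D_{\SG'}(\phi)=\max_{\phi'}D_{\SG'}(\phi')$ $\iff$ $\phi$ dual optimal on $\SG'$, which is the claim. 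I expect the only nontrivial ingredient to be the tree-tightness fact $\max_{\phi'}D_{\SG'}(\phi')=\min_y E_{\SG'}(y)$; if I cannot simply cite it, I would prove it constructively by running the dynamic-programming update \eqref{DP-1}--\eqref{DP-2} from the leaves to a chosen root, which drives every edge minimum to zero, empties every non-root unary, and accumulates the optimal subtree energy in the root's unary term, so that $D_{\SG'}$ attains $\min_y E_{\SG'}$ and the supremum is realized.
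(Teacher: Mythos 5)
Your proof is correct, and it takes a genuinely different route from the paper's. The paper argues the two directions separately: for the ``if'' direction it invokes strong duality \emph{and} complementary slackness on the tree (the unary and pairwise minima of $\theta^\phi$ are attained at an optimal labeling $y^*$) and then sandwiches $g(y^*)$ between $E_{\SG'}(y^*)$ on both sides; for the ``only if'' direction it first extracts $\theta^\phi_{uv}(y^*_u,y^*_v)=0$ from the equal-minima property and finishes with a per-node proof by contradiction. You instead collapse the biconditional into a single chain of equivalences, built on two observations the paper does not exploit directly: the lower-bound clause of \cref{def:modular-minorant} is free (indeed it holds even without $\theta^\phi\ge 0$, since $\sum_{uv\in\SE'}\theta^\phi_{uv}(y_u,y_v)\ge\sum_{uv\in\SE'}\min_{s,t}\theta^\phi_{uv}(s,t)=0$ by the hypothesis alone), and under the hypothesis $\min_y g(y)=D_{\SG'}(\phi)$, so the equal-minima clause literally \emph{is} the statement $D_{\SG'}(\phi)=\min_y E_{\SG'}(y)$, which coincides with block optimality by tightness of the restricted dual on trees. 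Your version is shorter, treats both directions symmetrically, never needs complementary slackness, and isolates all tree-specific content in the single fact $\max_{\phi'}D_{\SG'}(\phi')=\min_y E_{\SG'}(y)$, which your leaves-to-root \tDP construction proves correctly (each processed unary is emptied, each edge min-marginal is driven to zero, and the root accumulates the optimal energy); the paper's version is longer and its contradiction step more delicate, but it makes the slackness structure of block-optimal reparametrizations explicit, which is of some independent interest. Two points to tighten in your write-up: the justification for passing between $D$ and $D_{\SG'}$ should be that the coordinates of $F_{\SE'}$ enter \emph{only} the $\SG'$-terms of $D$ — not, as you wrote, that only those coordinates enter the $\SG'$-terms, since $\theta^\phi_u$ for $u\in\SV'$ also depends on the fixed coordinates $\phi_{u,w}$ with $uw\notin\SE'$ — i.e.\ what matters is that $D-D_{\SG'}$ is constant on the block; and you should state explicitly that $E_{\SG'}$ is invariant under changes of the block coordinates, which is what makes $\min_y E_{\SG'}(y)$ a fixed benchmark in your weak-duality step and in the \tDP argument.
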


To put it differently, if $\SG'$ defines a sub-graph block for a block-coordinate ascent method, then choosing amongst block optimal reparametrizations $\phi$ is equivalent, up to a constant, to choosing a modular minorant for the energy $E_{\SG'}$. 

Observe that, for a sub-graph block $(\V', \E')$, there are $2 |\E'| |\Y|$ reparametrization variables but only $|\V'||\Y|$ coordinates are needed to define a minorant. The minorant naturally captures the degrees of freedom that are important for subgraph-based BCA methods. 

Minorants can be partially ordered with respect to how tightly they approximate the function. For two minorants $g,g'$ we write $g' \ge g$ if $g'(y)\ge g(y)$ for all $y\in\SY^{\V'}$. Since our minorants are modular, the condition is equivalent to component-wise inequality  $g'_u(s)\ge g_u(s)$ $\forall u\in \V', \forall s\in \Y$. 
%
The greater the minorant, the tighter it approximates the function. Hence, of interest are maximal minorants:
\begin{definition}[\cite{Discrete-Continuous-16}] A minorant $g$ is {\em maximal}, if there is no other minorant $g'\ge g$ such that $g'(y) > g(y)$ for some $y$.
 \end{definition}
For the best performance of a BCA method, it makes sense to select a maximal minorant and not just any minorant. To actually apply this idea to BCA methods, we show how the maximality property of a minorant translates back to reparametrizations:

\begin{restatable}{theorem}{Tmaxminorant}\label{thm:max-minorant-property}
Let $\G'$ be a tree and reparametrization $\phi$ be dual optimal on $\SG'$. The function $g(y)=\sum_{u\in\SV'}\theta^{\phi}_{u}(y_u)$ is a maximal minorant if and only if $\forall uv\in\SE'$ and $\forall s,t\in\SY$:
\begin{equation}\label{equ:max-minorant-condition}
 \min\nolimits_{s'\in\SY}\theta^{\phi}_{uv}(s',t) = \min\nolimits_{t'\in\SY}\theta^{\phi}_{uv}(s,t')=0\,.
\end{equation}
\end{restatable}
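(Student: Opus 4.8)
The plan is to treat maximality as the only thing left to establish: since $\phi$ is dual optimal on the tree $\G'$, \cref{prop:minorant-property} already guarantees that $g(y)=\sum_{u\in\SV'}\theta^\phi_u(y_u)$ is a minorant of $E_{\SG'}$, so I only need to prove the equivalence between \eqref{equ:max-minorant-condition} and maximality. Two elementary facts will do the work. First, because $g$ is modular, $\min_{y}g(y)=\sum_{u\in\SV'}\min_{s}\theta^\phi_u(s)$, and for modular minorants the order $g'\ge g$ is the componentwise order on the unary tables. Second, feasibility of the constrained dual \eqref{c-dual}, $\theta^\phi\ge 0$, gives $g(y)\le E_{\SG'}(y)$ pointwise, with equality precisely for those labelings whose every tree edge carries zero reparametrized pairwise cost.

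For sufficiency I assume \eqref{equ:max-minorant-condition} and show no minorant can strictly dominate $g$. Let $g'\ge g$ be any minorant and fix an arbitrary node $w\in\SV'$ and label $r\in\SY$. I root $\G'$ at $w$, set $y_w=r$, and extend $y$ outward edge by edge: the condition \eqref{equ:max-minorant-condition} guarantees that whenever a parent has been labeled, the child admits a label making the pairwise cost on that edge zero. Because $\G'$ is a tree this construction never conflicts with itself, and the resulting labeling $y$ has all tree-edge pairwise costs zero, hence $E_{\SG'}(y)=g(y)$. Then $g(y)\le g'(y)\le E_{\SG'}(y)=g(y)$ forces $g'(y)=g(y)$; since $g'\ge g$ termwise, equality of the two modular sums forces equality of every term, in particular $g'_w(r)=g_w(r)$. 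As $(w,r)$ was arbitrary, $g'=g$, so $g$ is maximal. The tree structure is exactly what rules out cyclic conflicts in the propagation.

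For necessity I argue the contrapositive: if \eqref{equ:max-minorant-condition} fails I produce a strictly larger minorant. Suppose $\min_{s'}\theta^\phi_{uv}(s',t)=\delta>0$ for some edge $uv\in\SE'$ and label $t$ (the other case is symmetric, swapping $u$ and $v$). I reparametrize locally, subtracting $\delta$ from $\theta^\phi_{uv}(s',t)$ for all $s'$ and adding $\delta$ to $\theta^\phi_v(t)$ — this is the reparametrization \eqref{equ:reparametrization} with $\phi_{v,u}(t)$ decreased by $\delta$, leaving every other edge and node potential untouched. The affected entries stay non-negative by the choice of $\delta$, and the new unary table $g'$ satisfies $g'\ge g$ with strict inequality at $(v,t)$, so $g'$ exceeds $g$ on every labeling with $y_v=t$.

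It remains to confirm that $g'$ is still a minorant, and this is the step I expect to be the only real subtlety, since naively raising a node cost looks as though it could push $\min g'$ above $\min E_{\SG'}$. The resolution is the interplay of the two facts above: feasibility $\theta^{\phi'}\ge 0$ gives $g'(y)\le E_{\SG'}(y)$ pointwise, hence $\min g'\le\min E_{\SG'}$; while $g'\ge g$ componentwise together with modularity gives $\min g'\ge\min g=\min E_{\SG'}$, the last equality holding because $g$ is a minorant. Therefore $\min g'=\min E_{\SG'}$, so $g'$ is a minorant strictly dominating $g$, and $g$ is not maximal. This establishes the contrapositive and completes the equivalence.
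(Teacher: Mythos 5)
Your proof is correct and follows essentially the same route as the paper's: for necessity you perform the same local reparametrization move (pushing the positive column-minimum $\delta$ from the edge into the node $v$ to obtain a strictly larger minorant), and for sufficiency you use the same tree-propagation argument, extending any node--label pair to a labeling whose tree edges all have zero reparametrized cost so that $g$ meets $E_{\SG'}$ there, which pins down any dominating minorant. The only cosmetic differences are that you verify directly that the perturbed $g'$ is still a minorant via the sandwich $\min g \le \min g' \le \min E_{\SG'}$ (where the paper appeals to preserved dual optimality and \cref{prop:minorant-property}), and you phrase sufficiency directly rather than by contradiction.
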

\vskip-0.5\baselineskip
With these results we can now draw conclusions about algorithms updating subgraph blocks.

All BCA methods considered, as they achieve block optimality, construct minorants. However, many of them are not maximal. Minorants constructed by \MPLP and \TBCA are non-maximal. The change introduced in \MPLPPP achieves maximality as illustrated in~\cref{fig:two-block-mplppp}. This minor change brings more than an order of magnitude speed-up to the algorithm in some problem instances~\cite{Tourani_2018_ECCV}. The correction can be extended to \TBCA, also leading to improvements without any further changes, \cref{sec:TRWSvs}.

On the other side, the connection we established allows to interpret \DMM as a BCA method working on the dual~\eqref{c-dual} and identify its reparametrization form as presented. 

\begin{figure}[t]
\centering
\includegraphics[width=0.80\linewidth]{./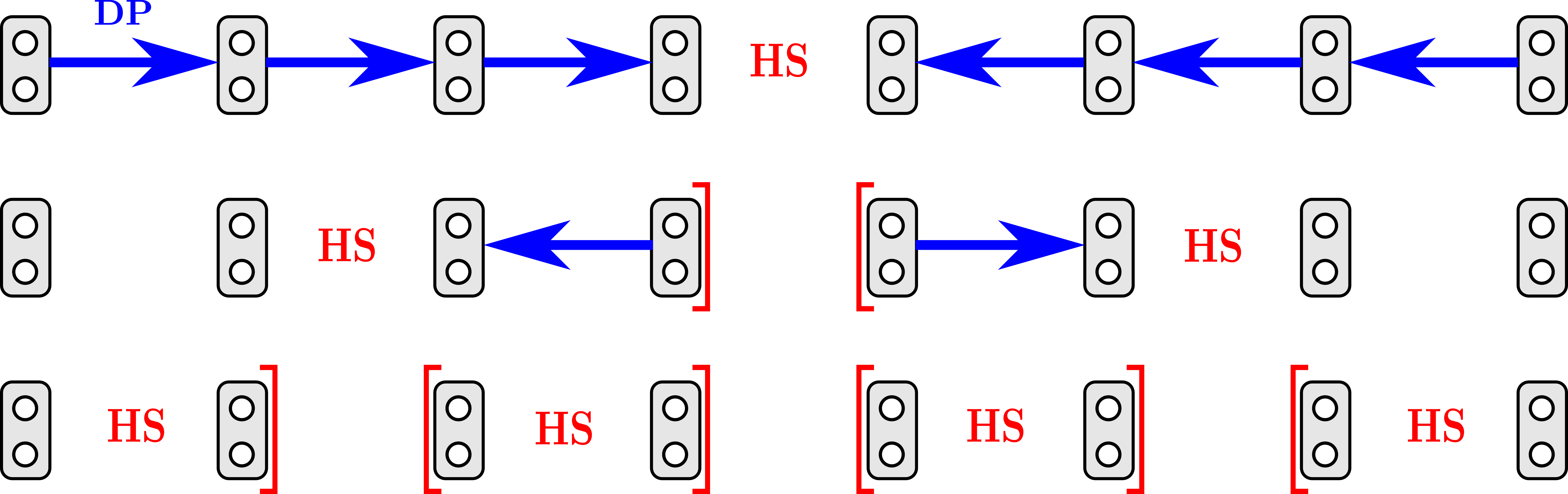}
\caption{\label{fig:hm-message-passing}
Hierarchical Minorant computation. The top level of hierarchy computes DP updates (blue arrows) towards the central edge. At the central edge the {\tt Handshake} update (red HS) is applied. Then the same is applied recurrently to the two formed sub-chains. The messages that have been already computed are kept from preceding levels. When recurrence completes, the HS operation has been applied to every edge, resulting in a \emph{maximal-minorant}.}
\end{figure}
\begin{algorithm}[t]
\caption{Chain\,Hierarchical\,Minorant\,({\HM})\,Update}
\begin{algorithmic}[1]
\Require $\phi$ - starting reparametrization; chain sub-graph $(\V', \E')$ arranged as $\SV=\{1,\dots,n\}$ and $\SE=\{\{i,i+1\}\colon i=1,\dots,n-1\}$.
\State $i_{L}:=\floor{\frac{n}{2}}$; $i_R:=i_L+1$  \Comment {Compute the left and right mid-points of the chain}
\For{$i=1,\dots,i_R - 1$} \Comment{Push costs from chain start to $i_L$}
\State $(\phi_{i,i+1},\phi_{i+1,i}):=\DP_{i,i+1}(\theta, \phi)$
\EndFor
\For{$i=n,\dots,i_R+1$} \Comment{Push costs from the chain end to $i_R$}
\State $(\phi_{i-1,i},\phi_{i,i-1}):=\DP_{i,i-1}(\theta, \phi)$
\EndFor
\State $(\phi_{i_L,i_R},\phi_{i_R,i_L}):=\HS_{i_L,i_R}(\theta, \phi)$ \Comment{Handshake update on the middle edge}
\State Call \HM on two sub-chains $[1\dots i_L]$ and $[i_R \dots n]$ if not empty
\end{algorithmic}
\label{alg:dmm}
\end{algorithm}
\begin{figure*}[!ht]
\centering
\setlength{\tabcolsep}{5pt}
\setlength{\lineskip}{0pt}
\renewcommand*{\arraystretch}{0}
\begin{tabular}{ccc}
{\scriptsize \bf Sparse} & {\scriptsize \bf Denser} & {\scriptsize \bf Complete}\\[0pt]
\begin{tabular}{c}
\includegraphics[width=0.26\linewidth]{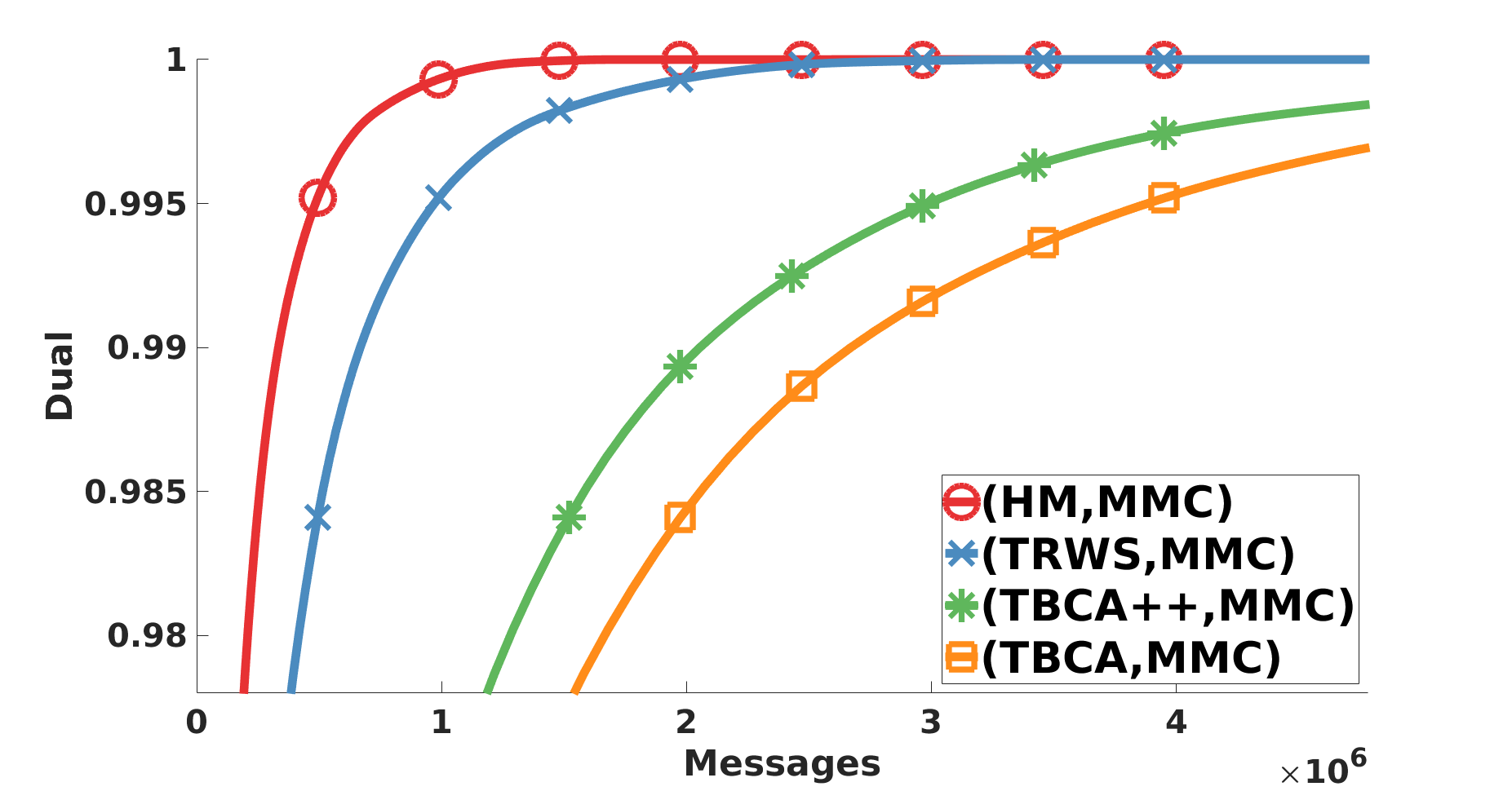}%
\end{tabular}&%
\begin{tabular}{c}
\includegraphics[width=0.26\linewidth]{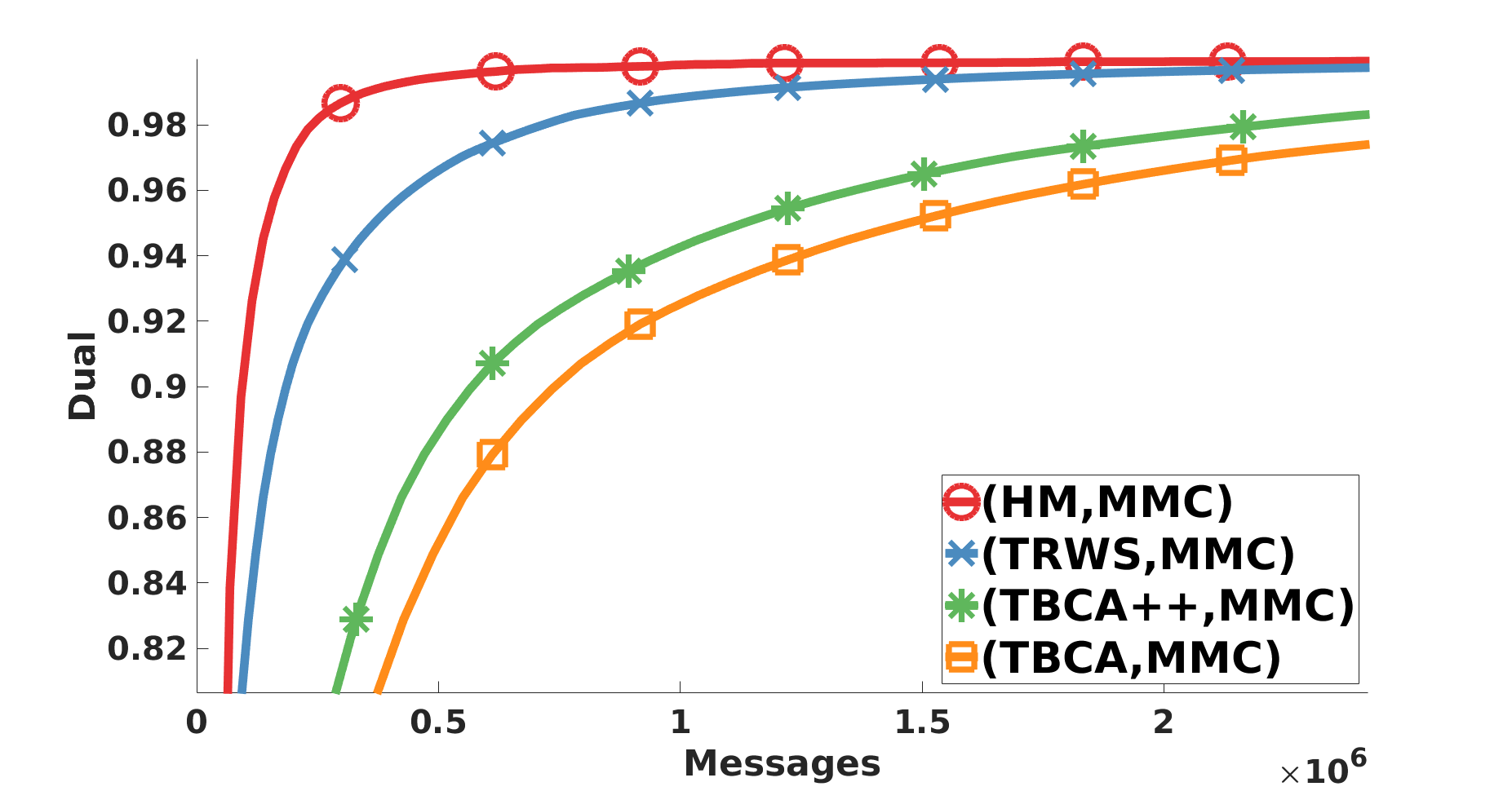}%
\end{tabular}&%
\begin{tabular}{c}
\includegraphics[width=0.26\linewidth]{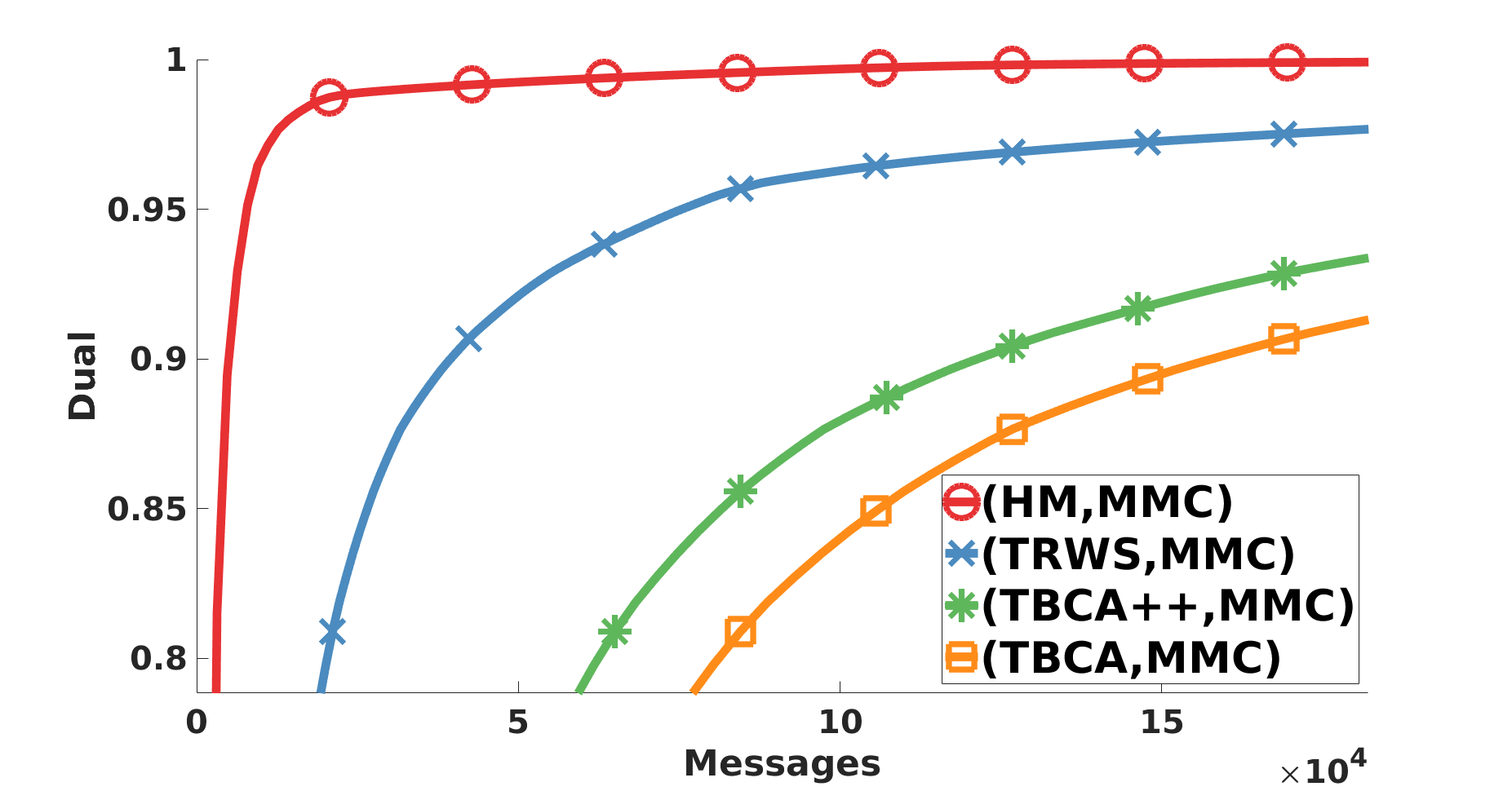}
\end{tabular}
\end{tabular}
\caption{Comparison between \TRWS efficiently optimizing all monotonic chains with subgraph-based updates on a covering subset of monotonic chains. See description of datasets in~\cref{tab:datasets}.
\label{fig:TRWS-vs-subgraph}}
\end{figure*}
\section{Synthesis}	\label{sec:SPAM}
Based on the above analysis and the experimental comparison of individual components of BCA methods, we synthesize the following BCA algorithm that appears to perform universally better in terms of achieved dual objective value versus time on a corpus of diverse problems. Here are the design choices that we made:
\begin{itemize}[leftmargin=*]
\item We utilize chain blocks and the hierarchical minorant ({\tt HM}, Algorithm~\ref{alg:dmm} and Fig.~\ref{fig:hm-message-passing}) updates. These updates are the most expensive ones, but the maximality property and better redistribution of the excess costs pays off in practice.
\item We observed that with the hierarchical minorant updates, selecting chain blocks dynamically does not give an improvement over a static set of chains, unlike in~\cite{tarlow2011dynamic}.
\item We select chains automatically for a given graph by a new heuristic. This heuristic behaves favourably in both sparse regular graphs as well as dense graphs. This automatic choice allows the method to achieve a uniformly good performance over problems with different graph structure and connectivity.

\end{itemize}
Next we present specifically designed experiments that led to these choices.
\begin{figure}[!t]
\centering
\includegraphics[width=0.6\linewidth]{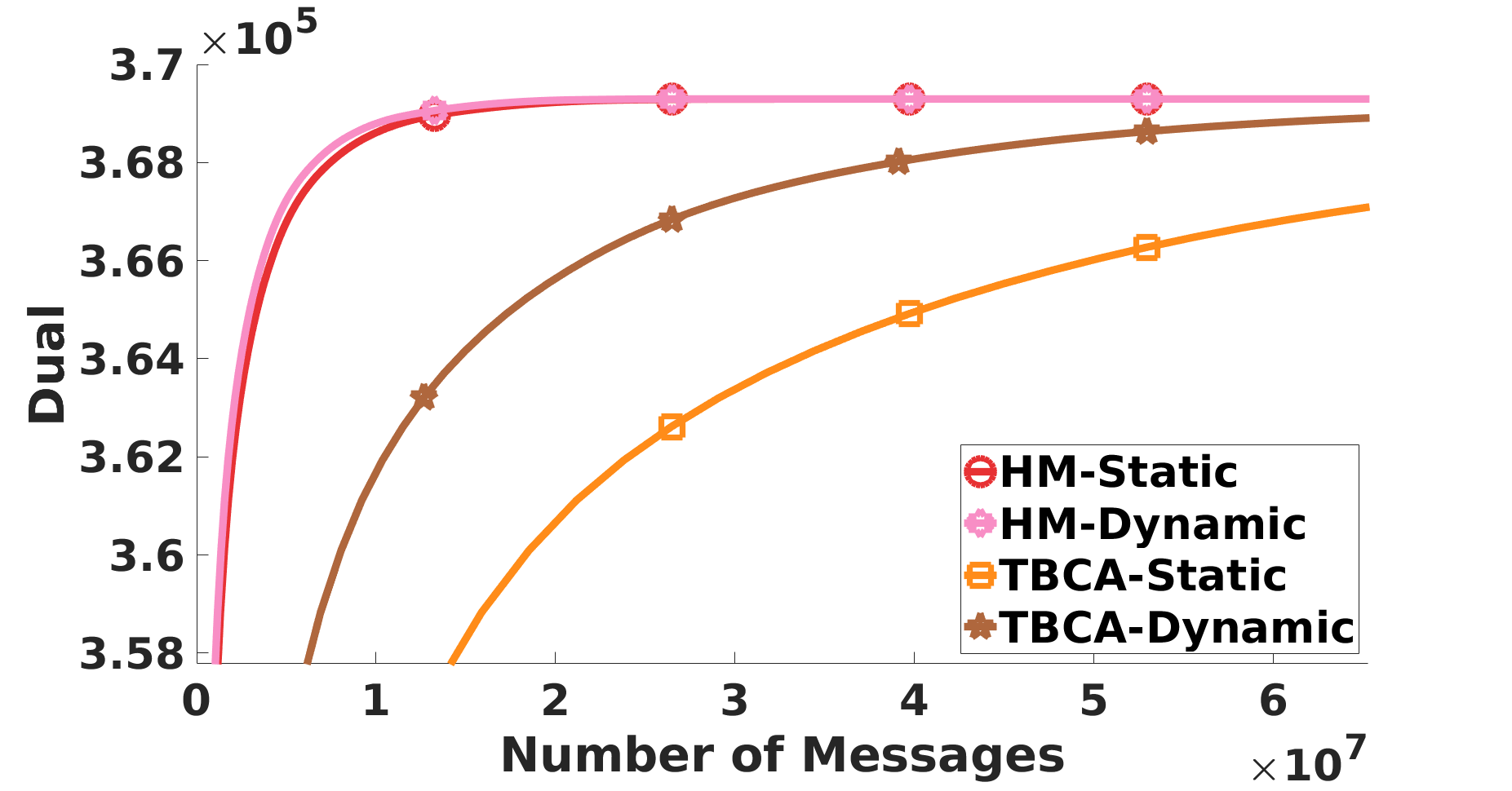}
\caption{Comparison of static and dynamic spanning trees on tsukuba from the stereo dataset. Both the dual and the messages are not normalized.}
\label{fig:local-pd-gap}
\end{figure}
\begin{figure}
\centering
\includegraphics[scale=0.07]{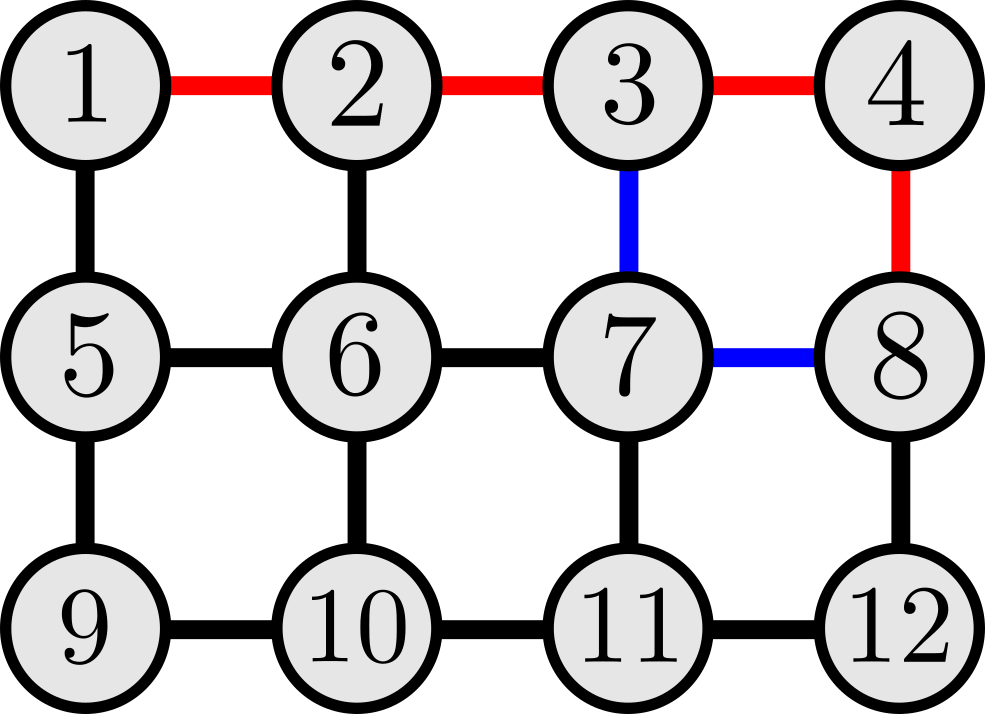}
\caption{Strictly shortest paths. In this example the chain 1-2-3-4-8 and the chain 1-2-3-7-8 are both shortest paths to node 8 and therefore none of them is the strictly shortest path. The chain 1-2-3-4 is the unique (and hence strict) shortest path from 1 to 4.%
\label{fig:strictlyShortestPath}}
\end{figure}

%
\begin{figure*}[!ht]
\centering
\setlength{\tabcolsep}{5pt}
\setlength{\lineskip}{0pt}
\renewcommand*{\arraystretch}{0}
\begin{tabular}{ccc}
{\scriptsize \bf Sparse} & {\scriptsize \bf Denser} & {\scriptsize \bf Complete}\\[0pt]
\begin{tabular}{c}
\includegraphics[width=0.25\linewidth]{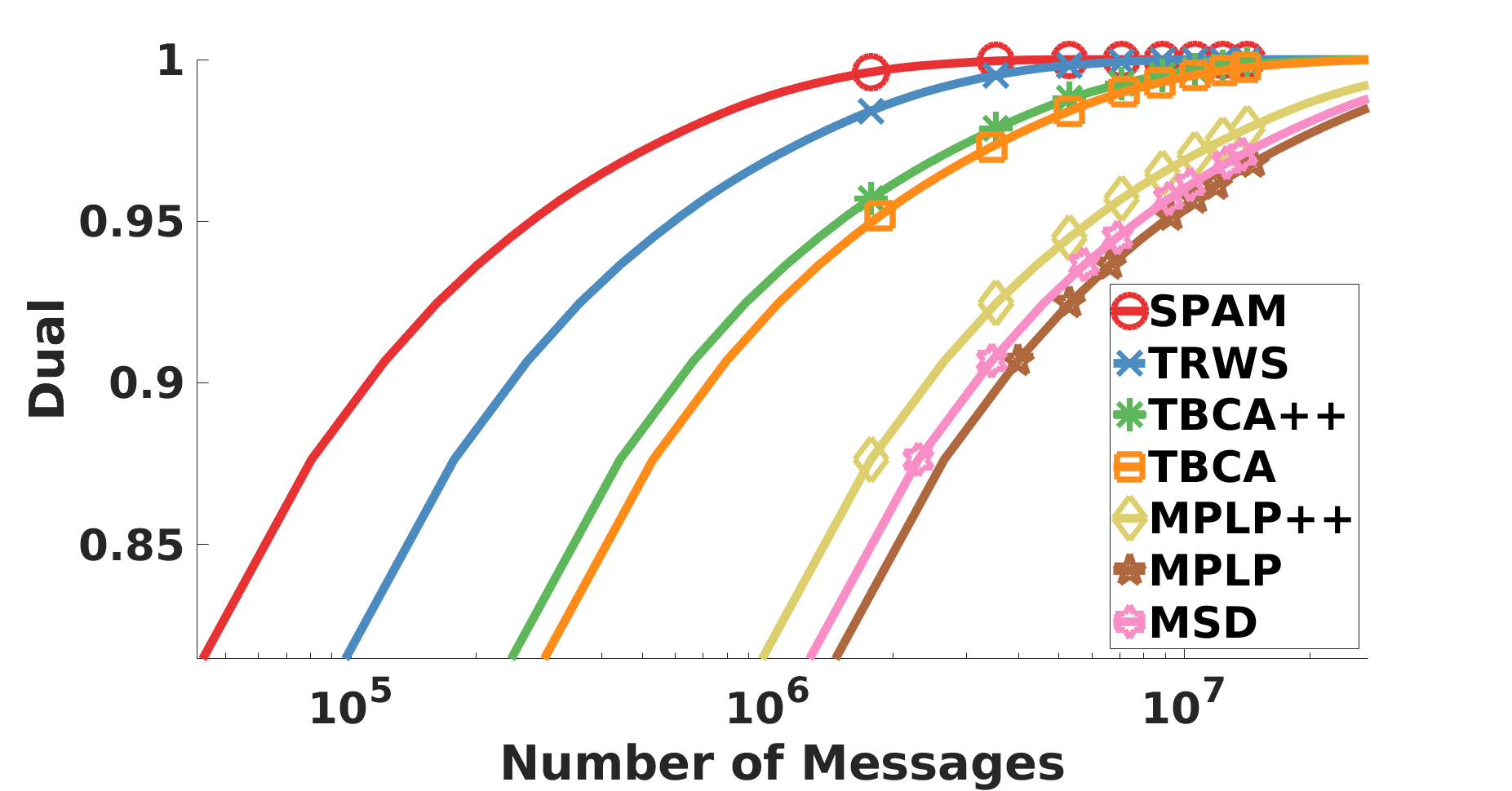}%
\end{tabular}&%
\begin{tabular}{c}
\includegraphics[width=0.25\linewidth]{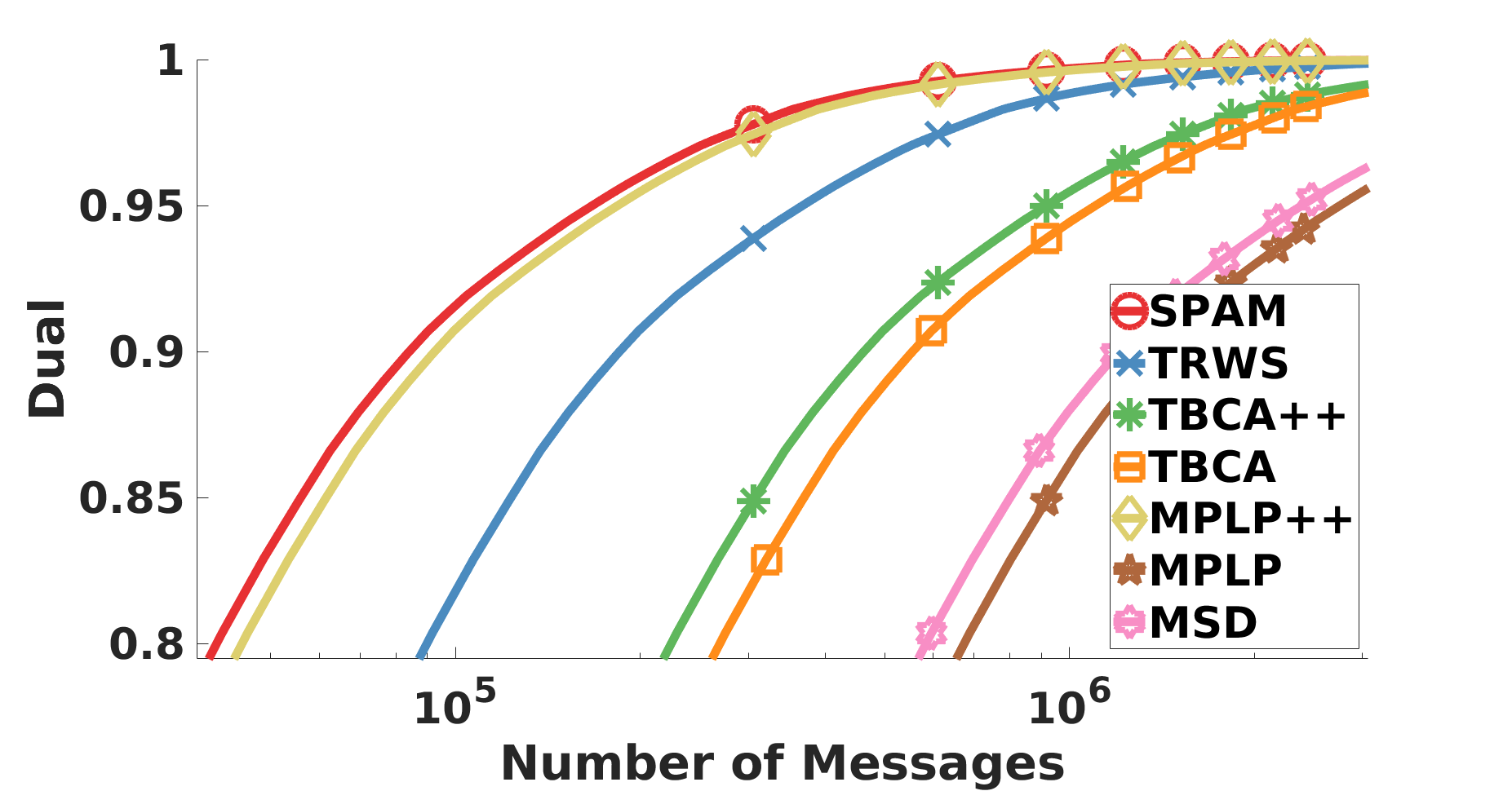}%
\end{tabular}&%
\begin{tabular}{c}
\includegraphics[width=0.25\linewidth]{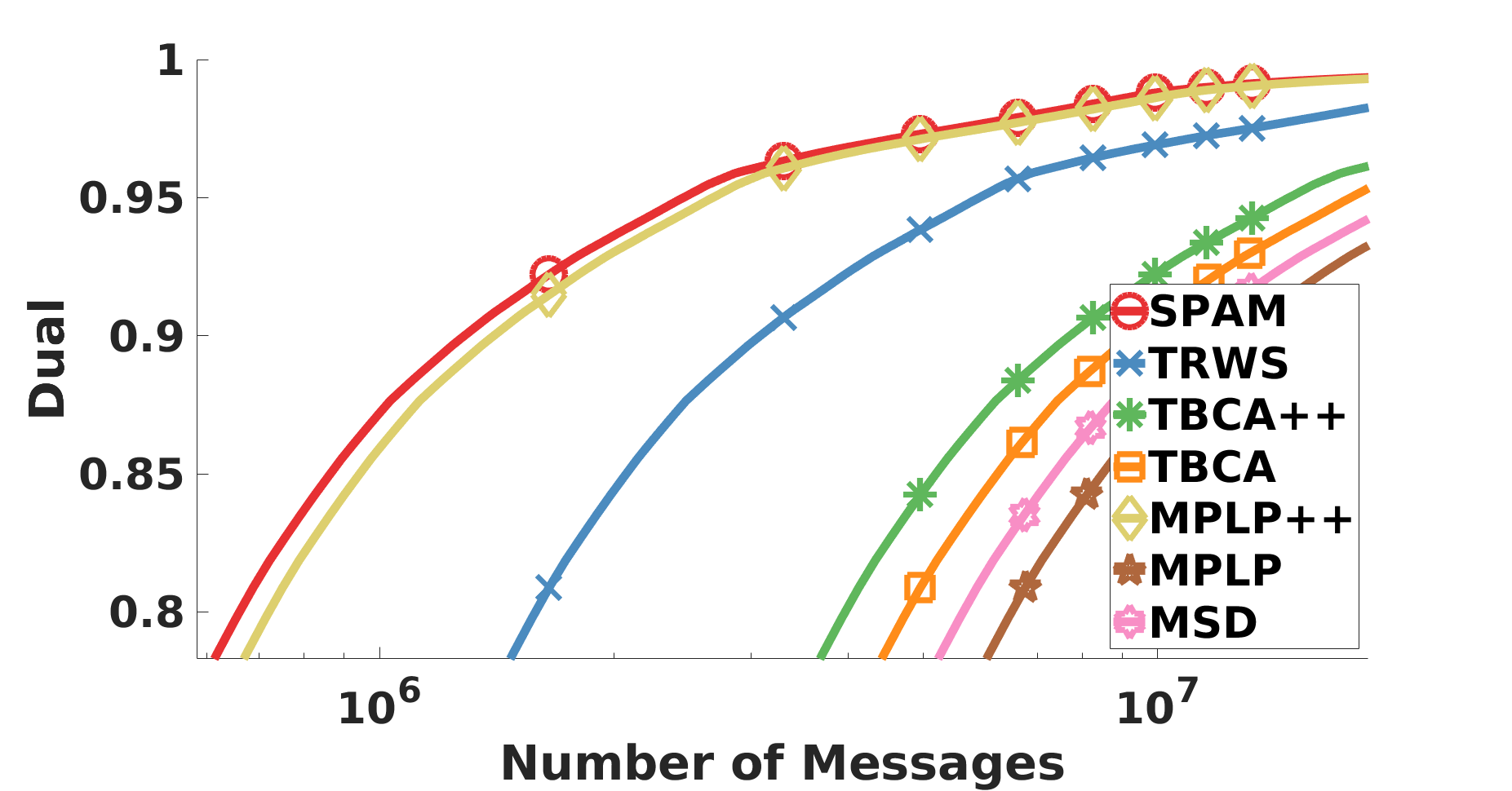}
\end{tabular}
\end{tabular}
\caption{\label{fig:dualComplete}%
Algorithm comparison for sparse, denser and complete graphs following the experimental setup in~\cref{sec:exp-setup}.
The minor difference between \MPLPPP and \SPAM for complete graphs is explained by different order of computations.
The corresponding runtime plots look qualitatively the same and are provided in the appendix.} 

\centering
\vskip\baselineskip
\setlength{\tabcolsep}{5pt}
\renewcommand*{\arraystretch}{1.2}
\begin{tabular}{|p{0.3\linewidth}|p{0.34\linewidth}|p{0.3\linewidth}|}
\hline
{\bf Sparse:}
 \small Problems on 4-connected grid graphs (less than $1\%$ connectivity): instances of 
 \texttt{stereo} ($3$ models with truncated linear pairwise costs and $16$, $20$ and $60$ labels) and \texttt{mrf-inpainting} ($2$ models with truncated quadratic pairwise costs and $21838$ and $65536$ nodes) from the Middlebury MRF benchmark~\cite{szeliski2008comparative}.
& {\bf Denser:} 
\small Problems with connectivity in between grids and complete graphs: \texttt{worms}~\cite{kainmueller2014active} ($30$ instances coming from the field of bio-imaging, $558$ nodes with $20-65$ labels each, around $10\%$ connectivity);
\texttt{denser-stereo} (adds additional longer-range pairwise smoothness interactions to the \texttt{stereo} dataset, for each of 3 instances we create 4 denser variants of increased connectivity (20\%, 30\%, 40\%, 50\%)).
&
{\bf Complete:}
\small Problems with fully connected graphs ($100\%$ connectivity):
 \texttt{protein-folding} model~\cite{dataset-proteinfolding} instances of OpenGM benchmark~\cite{kappes-2015-ijcv} ($11$ instances with $33-40$ nodes and up to $503$ labels per node);
\texttt{pose} $6$D object pose estimation model~\cite{michel2017global} instances of~\cite{Tourani_2018_ECCV} ($32$ instances with $600$-$4800$ variables and $13$ labels).\\
\hline
\end{tabular}
\caption{Datasets of increasing connectivity used for benchmarking.\label{tab:datasets}}
\end{figure*}

\subsection{Experimental Setup}\label{sec:exp-setup}
For a uniform evaluation over different problem types we formed three datasets grouping problems from different domains by their graph {\em connectivity}, which is the proportion of number of edges to the maximal possible number of edges $|\V||\V-1|/2$. These datasets are detailed in~\cref{tab:datasets}.

For objectiveness of comparison, we measure the computation cost in {\em messages}, the updates of the type $\min_{t} (a(t) + \theta_{uv}(s,t))$ that form the bulk of computation for all presented BCA methods.

The number of messages is scaled by the ratio $\overline{|\E|}/|\E|$, where $|\E|$ is the number of edges in an instance and $\overline{|\E|}$ is the average over the dataset. These normalizations allow us to show average performance on the whole datasets.

\subsection{TRWS vs. Subgraph Updates}\label{sec:TRWSvs}
\TRWS is selected as representing the most efficient node-adjacent update. In particular, it is much faster than \CMP and \MSD as shown \eg in~\cite{kolmogorov2015new,kappes-2015-ijcv}. It was originally derived as a method for optimizing the dual decomposition of~\eqref{equ:energy-min}  with monotonic chains~\cite{kolmogorov2006convergent}. We compared it to subgraph-based updates running on the same set of chains. Such direct comparison over datasets of different sparsity has not been conducted before. For a given graph we took a subset of maximum monotonic chains to cover all edges (exact details can be found in~\cref{sec:algs-suppl}). \TRWS is very efficient and takes $O(|\E|)$ messages to achieve optimality on all monotonic chains, including our covering subset. Two subgraph-based updates can be applied to optimize over chains in the covering subset sequentially: \TBCA, taking $O(|\E|)$ messages as well and hierarchical minorant (\HM,~\cref{alg:dmm}) taking $O(|\E|\log |\E|)$ messages. 
The comparison in~\cref{fig:TRWS-vs-subgraph} on our broad corpus of problems shows that while \TBCA is clearly inferior in performance to \TRWS, \HM is actually performing significantly better than \TRWS. It works on the same subproblems as \TBCA but the maximal minorant property justifies the extra computation time.

In this experiment we also evaluated an improved version of \TBCA, denoted \TBCAPP, which modifies \TBCA as follows: After each \trDP update on $uv$ the operation~\eqref{equ:handshake-1} pushes the remaining cost back to $v$ and thus achieves the maximal minorant conditions~\eqref{equ:max-minorant-condition}. This small change leads to a noticeable improvement, see~\cref{fig:TRWS-vs-subgraph}. However, we can conclude that a better redistribution of cost excess done by \HM is more important than the maximality alone.

\subsection{Static vs. Dynamic}
In the \TBCA and \HM methods, the choice of subproblems is not limited to monotonic chains. In~\cite{tarlow2011dynamic} it was proposed to select spanning trees dynamically favouring node-edge pairs with the most disagreement as measured by the local primal-dual gap. We verified whether this strategy is beneficial with \HM. \cref{fig:local-pd-gap} shows the comparison of dynamic spanning trees versus static spanning trees (a fixed collection selected greedily to cover all edges, see~\cref{sec:algs-suppl}). We reconfirm observations~\cite{tarlow2011dynamic} on our corpus of problems that dynamic strategy is beneficial with \TBCA updates. It does not however have a significant impact on the performance of \HM updates. We therefore propose to use a static collection of subgraphs, optimized for a given graph.
\subsection{Graph Adaptive Chain Selection}
\label{ssec:graph-adaptive}
\citet{Tourani_2018_ECCV} have shown that for densely connected graphs, edge-based updates are much faster than other methods. Since the \MPLPPP update is equivalent to \HM update on chains of length 1, this suggests that shorter chains are more beneficial in dense graphs. Intuitively, when there is a direct edge between nodes, the longer connections through other nodes become increasingly less important. On the contrary, in grid graphs \MPLPPP is found inferior to \TRWS~\cite{Tourani_2018_ECCV} and the natural choice of row and column chains seems to be the best selection of sub-problems. 

Based on these observations there is a need for the sub-graphs to be chosen adaptively to the graph topology. We use chain subproblems for their simplicity and better parallelization utility and propose the following informed heuristic:
\begin{itemize}[leftmargin=*]
\item Select subproblems sequentially as shortest paths from the yet uncovered part of the graph;
\item Chose {\em strictly} shortest paths, such that no other path of the same length connects the same nodes;
\item Find the most distant pair of nodes connected by a strict shortest path. 
\end{itemize}
An example of strict shortest path is given in~\cref{fig:strictlyShortestPath}. The algorithm implementing this heuristic, detailed in~\cref{sec:algs-suppl}, \cref{alg:compute-sp}, randomly picks a starting vertex, finds all strictly shortest paths from it (a variant of Dijkstra search), removes the longest traced shortest path from the graph and reiterates.

This heuristic has the following properties: (i) in a complete graph, it selects edge subproblems; (ii) in a grid graph, irrespective of the input data ordering, it is likely to select large pieces of rows and columns (with some distortions due to greediness); (iii) in graphs with bottleneck long connections, these connections are very likely to be covered with long chains. 

\subsection{The SPAM Algorithm}
The synthesis of block selection via  graph adaptive chain selection, as described in~\ref{ssec:graph-adaptive}, and the hierarchical minorant updates we call the {\bf S}hortest {\bf P}ath {\bf A}daptive {\bf M}inorant (SPAM) algorithm. The adaptive chain selection has a linear complexity w.r.t. the size of the graph and takes only a fraction of a single iteration time of the main algorithm.

\subsection{Final Experimental Evaluation} \label{sec:experimental-validation}
We tested the proposed \SPAM algorithm against existing methods. 
\cref{fig:dualComplete} shows the summarized evaluation results. One can see that across all graph types \SPAM consistently does well. It automatically adapts to the density of the graph, reducing to \MPLPPP for complete graphs, where \TRWS struggles. In grid graphs, where \TRWS uses the natural ordering, \SPAM automatically finds sub-problems similar to rows and columns and achieves a significant improvement while \MPLPPP becomes inefficient. 
Detailed results per dataset and speed-up factors with confidence intervals are included in~\cref{suppl:experiments}.

\section{Conclusion}
We have reviewed, systematized and experimentally compared different variants of block-coordinate-ascent methods proposed to date. We have shown the updates for subgraph-based methods take the form of modular minorants and that maximal minorants outperform non-maximal ones. We experimentally compared existing methods as well as new combinations of basic components of BCA algorithms and synthesized a novel algorithm that is a synthesis of the best aspects of all methods. It additionally adopts block-size to the graph structure and delivers uniformly best performance across the tested datasets.

\textbf{Acknowledgements} This work was supported by the German Reserach Foundation (“Exact Relaxation-Based Inference in GraphicalModels”, DFG SA 2640/1-1) and the European Research Council (ERC European Unions Horizon 2020 research and innovation program, grant 647769). The computations were performed on an HPC Cluster at the Center for Information Services and High Performance Computing (ZIH) at TU Dresden. Alexander Shekhovtsov was supported by the project ``International Mobility of Researchers MSCA-IF II at CTU in Prague'' $(CZ.02.2.69/0.0/0.0/18\_070/0010457)$.

{
\bibliographystyle{unsrtnat}
\bibliography{main-final}
}

\clearpage
\onecolumn
\appendix
\numberwithin{figure}{section}
\addtocontents{toc}{\protect\setcounter{tocdepth}{2}}
\pagestyle{empty}

\setcounter{figure}{0}
\setcounter{table}{0}
\counterwithin{figure}{section}
\counterwithin{table}{section}
\counterwithin{theorem}{section}
\counterwithin{proposition}{section}
\counterwithin{lemma}{section}

\begin{center}
\Large Designing an efficient dual solver for discrete energy minimization \\ Appendix
\end{center}

Contents:
\begin{itemize}
\item[A] Proofs of Theorems 1,2.
\item[B] Algorithms details and description of monotonic chains used in experiment~\cref{fig:TRWS-vs-subgraph}.
\item[C] Detailed experimental results.
\end{itemize}

\section{Proofs}\label{sec:proofs}

\Tminorant*

\begin{proof}

\paragraph{The "if" part} Let $\phi$ be a dual optimal reparameterization for $E_{\SG'}$ on the graph $\SG'	= (\V',\E')$. 
We need to show that $g(y)$ is a minorant. \ie
\begin{itemize}
\item $g(y) \leq E_{\SG'}(y)$ for all $y$. (\texttt{lower-bound property}) 
\item   $g(y^*) = E_{\SG'}(y^*)$ is the cost of a minimizing labeling $y^*$. (\texttt{same-minima property})
\end{itemize}
We have by the definition of reparametrization
\begin{eqnarray}	
\sum_{u \in  \SV'}\theta^{\phi}_u(y_u)+\sum_{uv \in  \SE'}\theta^{\phi}_{uv}(y_u,y_v)=\sum_{u \in  \SV'}\theta_u(y_u)+\sum_{uv \in  \SE'}\theta_{uv}(y_u,y_v)=E_{\SG'}(y) \label{equ:pr1repa1}.
\end{eqnarray}

We assume \Wlog $\theta^{\phi}_{uv}(y_u,y_v)\geq0$. Substituting this in~\eqref{equ:pr1repa1} we have 
\begin{equation}
g(y)=\sum_{u \in  \SV'}\theta^{\phi}_u(y_u) \leq \sum_{u \in  \SV'}\theta^{\phi}_u(y_u)+\sum_{uv \in  \SE'}\theta^{\phi}_{uv}(y_u,y_v) = E_{\SG'}(y), \forall y \in \Y^{n} \implies g(y) \leq  E_{\SG'}(y), \forall y \in \Y^{n} \label{equ:pr1minorant1},
\end{equation}
where the left hand side matches the definition of $g(y)$ in the theorem. With this we have proved the lower bound property.

Now we prove the same minima property. 
Comparing the dual function~\eqref{equ:Lagrange-dual} with $g(y)$ and $g(y)$ with ~\eqref{equ:energy-min} we have the following inequalities:
\begin{eqnarray}
D(\phi)= \sum_{u \in \SV'}\min_{y_u}\theta^{\phi}_u(y_u)+\sum_{uv \in  \SE'}\min_{y_u,y_v}\theta^{\phi}_{uv}(y_u,y_v) \leq \sum_{u \in \SV'}\theta^{\phi}_u(y_u)+\sum_{uv \in  \SE'}\min_{y_u,y_v}\theta^{\phi}_{uv}(y_u,y_v)=g(y), \forall y \in \Y^{n} \label{equ:D_lower_g};\\
g(y)=\sum_{u \in \SV'}\theta^{\phi}_u(y_u)=\sum_{u \in \SV'}\theta^{\phi}_u(y_u)+\sum_{uv \in  \SE'}\theta^{\phi}_{uv}(y_u,y_v) \leq E_{\SG'}(y), \forall y \in \Y^{n} \label{equ:g_lower_E}.
\end{eqnarray}
Since $\SG'$ is a tree-subgraph, strong duality holds and we have for all pairs of an optimal labeling $y^*$ and an optimal dual $\phi$ that $D(\phi)=E_{\SG'}(y^{*})$ and there holds complementarity slackness conditions. It follows that $\min_{y_u}\theta^{\phi}_u(y_u)$ is attained at $y^*_u$ and $\min_{y_u, y_v}\theta^{\phi}_{u,v}(y_u, y_v)$ is attained at $(y^*_u, y^*_v)$ (there is an optimal solution composed of minimal nodes and edges). It follows that the next inequalities are satisfied:
\begin{align}
\theta^{\phi}_u(y_u) & \geq \theta^{\phi}_u(y^*_u), \label{t1-ieq-1} \forall y_u\in\Y;\\
\theta^{\phi}_u(y_u, y_v) & \geq \theta^{\phi}_{u,v}(y^*_u, y^*_v) , \ \forall y_u,\, y_v\in\Y.
\end{align}
Using~\eqref{t1-ieq-1} in $g(y)$ we obtain 
\begin{align}
g(y) \geq \sum_{u}\theta^{\phi}_u(y^*_u) = E_{\SG'}^*.
\end{align}
Thus as $E_{\SG'}(y^{*}) \leq g(y^{*})\leq E_{\SG'}(y^{*})$,  $g(y^{*})= E_{\SG'}(y^{*})$, proving the \texttt{equal-minima} property.

\paragraph{The "only if" part} 
We have to show that if $g(y)$ is a minorant of $E_{\SG'}$, then $\phi$ is an optimal reparameterization, \ie . $D(\phi)=E_{\SG'}(y^*)=g(y^*)$, where $y^*$ is the optimal labelling for $E_{\SG'}$.

Due to the minorant \texttt{equal-minima property}, we have
\begin{equation}
g(y^*)=\sum_{u \in \V'}\theta^{\phi}_u(y^*_u) =\sum_{u \in \V'}\theta^{\phi}_u(y^*_u) +\sum_{uv \in \SE'} \theta^{\phi}_{uv}(y^*_u,y^*_v) = E_{\SG'}(y \mid \theta^\phi) \implies \sum_{uv \in \SE'} \theta^{\phi}_{uv}(y^*_u,y^*_v)=0;
\end{equation}
As we assume $\theta^{\phi}_{uv}(s,t)\geq 0$, for all $s,t \in \SY^2$ and $uv \in \SE'$, this would imply all terms $\theta^\phi_{uv}(y^*_u,y^*_v)$ are identically zero, \ie
\begin{equation} \label{equ:prws-all-0}
\theta^\phi_{uv}(y^*_u,y^*_v)=0, \quad \forall uv \in \SE'.
\end{equation}
Our initial objective was to show $D(\phi)=g(y^*)=E(y^* \mid \theta^\phi)$. As we assume $\sum_{uv \in \SE'}\min_{s,t}\theta^\phi_{uv}(s,t)=0$, we just have to show
\begin{equation}
D(\phi)=\sum_{u \in \SV'}\min_s\theta^{\phi}_u(s)=\sum_{u \in \SV'} \theta^{\phi}_u(y^*_u)=g(y^*).
\end{equation}
Following a proof by contradiction argument, we claim
\begin{equation}
D(\phi)=\sum_{u \in \SV'}\min_s\theta^{\phi}_u(s)=\sum_{u \in \SV'}\theta^{\phi}_u(y_u^*)=g(y^*)=E_{\SG'}(y^* \mid \theta^\phi).
\end{equation}
Assume the above statement is false and let $D^*$ be the optimal dual. 

Further, \Wlog let's assume the $\min_s \theta^\phi_u(s)=y^*_u$ for all $u \in \SV' \setminus k$ and $\min_s \theta^\phi_k(s)=y_k^{+}$. As strong duality holds, we have 
\begin{eqnarray}
D(\phi)=\sum_{u \in \SV'}\min_s\theta^{\phi}_u(s)=\sum_{u \in \SV' \setminus k}\theta^{\phi}_u(y_u^*) + \theta^\phi_k(y^{+}_k)\label{equ:D-phi};\\
D^*=\sum_{u \in \SV'}\theta^{\phi}_u(y^*_u)=E_{\SG'}(y^* \mid \theta^\phi)\label{equ:D-opt-E}.
\end{eqnarray}
Thus by assumption $D(\phi)\geq D^*$, 
\begin{equation}
D(\phi)\geq D^* \implies \sum_{u \in \SV' \setminus k}\theta^{\phi}_u(y_u^*) + \theta^\phi_k(y^{+}_k) \geq \sum_{u \in \SV'}\theta^{\phi}_u(y^*_u) \implies \theta^\phi_k(y^{+}_k) \geq \theta^\phi_k(y^*_k).
\end{equation}
But $\theta^\phi_k(y^{+}_k)=\min_{y_k}\theta^{\phi}_k(y_k)\leq \theta^\phi_k(y^*_k)$, this is therefore a contradiction and $D(\phi)=D^*=g(y^\phi)=E(y^\phi \mid \theta^\phi)$.
\end{proof}

\Tmaxminorant*

\begin{proof}

\paragraph{"Only if part"}. 

For an optimal reparametrization $\phi$, its corresponding tight minorant by~\cref{prop:minorant-property} is $g(y)=\sum_{u}\theta^{\phi}_u(y_u)$. We need to prove the statement that minorant $g$ is maximal only if the conditions in the theorem are fulfilled.

Recall that we are working with the constrained dual so that $\theta^\phi \geq 0$ component-wise.
Assume for contradiction that one of the two zero minimum conditions is violated. Let it be the one with minimum over $s'$. Then $\exists uv\in\E'$ $\exists t$ such that $\lambda(t) := \min_{s'}\theta_{uv}(s',t) > 0$. We can then add $\lambda(t)$ to $\phi_{vu}(t)$. This will not destroy optimality of $\phi$ but will strictly increase $\theta^{\phi}_{v}(t)$, therefore leading to a strictly greater minorant, which contradicts maximality of $g$.

\paragraph{"If part"} We need to show that if the conditions of the theorem are fulfilled then $g$ is maximal.

Assume for contradiction that $g$ is not maximal, \ie there is a modular function $h(y)$ such that it is also a minorant for $E_{\G'}$ and it is strictly greater than $g$: $h(y) \geq g(y)$ for all $y$ and $h(y') > g(y')$ for some $y'$. 

The inequality $h(y) \geq g(y)$ for modular functions without constant terms is equivalent to component-wise inequalities:
\begin{align}
h_u(y_u) \geq g_u(y_u), \ \forall u, \forall y_u.
\end{align}
From the inequality $h(y') > g(y')$ we conclude that there exists $u$ and $y'_u$ such that $h_u(y'_u) > g_u(y'_u)$. By the conditions of the theorem, and assuming a tree graph, a labeling $y'$ can be constructed such that it takes label $y_u'$ in $u$ and all costs $\theta_{u,v}^\phi(y'_u, y'_v)$ are zero. The construction starts from $y'_u$, finds labels in the neighbouring nodes such that edge costs with them is zero and proceed recurrently with the neighbours and their unassigned neighbouring nodes. For the labeling $y'$ constructed in this way we have that
\begin{align}
g(y') = \sum_{u}\theta^{\phi}_{u}(y'_u) = \sum_{u}\theta^{\phi}_{u}(y'_u) + \sum_{uv}\theta^{\phi}_{u,v}(y'_u, y'_v) = E_{\G'}(y').
\end{align}
At the same time, $h(y') > g(y')$ and therefore $h(y') > \E_{\SG'}(y')$, which contradicts that $h$ is a minorant of $E_{\SG'}$.
\end{proof}

\section{Algorithms Details}\label{sec:algs-suppl}

\subsection{Maximal Monotonic Chains}

In this section we describe how we selected a collection of monotonic chains ($\texttt{MMC}$), on which \TRWS can run in its full efficiency and at the same time subgraph-based updates of \TBCA and \HM can be computed.

A chain is a subgraph of graph $\G=(\V,\E)$ that is completely defined by enumerating the sequence of nodes it contains, $\ie$ a chain $\SC$ is denoted as $\SC=(n_1,\hdots, n_M), n_i \in \V$, with $(n_i,n_{i+1}) \in \E$  for $i=1:M-1$ denoting the edges it contains. Therefore, for every pair of consecutive nodes $(n_i,n_{i+1})$ there must also exist a corresponding edge in $\E$ for a chain to be a subgraph of $\G$.

Let there be a partial order defined on the nodes $\V$ such for each edge $uv\in\E$ the nodes are comparable: either $u>v$ or $v<u$. This can be always completed to a total order as was used for simplicity in~\cite{kolmogorov2006convergent}. A chain $\SC$ is said to be {\em monotonic} if $n_i < n_{i+1}$ holds for its nodes. A chain $\SC$ is {\em maximal monotonic} if it is monotonic and not a a proper subgraph of some other monotonic chain.



For a given ordering, we select a collection of edge disjoint monotonic chains covering the graph by greedily finding and removing from the edge set maximal monotonic chains. Finding and removing one chain is specified by~\cref{alg:compute-mmc}. The algorithm works on the graph adjacency list representation. Let $Ad$ be the adjacency list corresponding to the directed version of directed the graph $\G$: $Ad(i)$ contains all neighbours of node $i$ in $G$ that are greater than $i$, \ie $\forall j \in Ad(i), j >i$. The operation $Ad(i).remove(j)$ removes element $j$ from the list $Ad(i)$. The algorithm is executed until all $Ad$ lists are empty (all edges have been covered).

\begin{algorithm}[h!] 
\caption{Compute Maximal Monotonic Chain}
\begin{algorithmic}[1]
\Function{$(\SC,Ad)$=computeMMC}{$Ad$}
\Comment{$Ad$ is the adjacency list of $\G$ as defined above.}
\State $\SC=\emptyset$, $\texttt{tail}=\emptyset$, $done=false$ \Comment{$\SC$ is initially empty., $\texttt{tail}$ is the last node added to the chain.}
\State Find the smallest in the order $i$ such that $Ad(i)$ is not empty.
\State $\SC.add(i)$, $\texttt{tail}=i$. \Comment{Add node $i$ to $\SC$. Update \texttt{tail}.}
\While{$!done$}
\State Find $j$ in $Ad(\texttt{tail})$ such that $j>\texttt{tail}$.
\If{$j$ is found}
\State $\SC.add(j)$, $Ad(\texttt{tail}).remove(j)$, $\texttt{tail}=j$ \Comment{The node $j$ is added to $\SC$, removed from $Ad(\texttt{tail})$. \texttt{tail} is updated.}
\ElsIf{$j$ is not found}
\State $done=true$	\Comment{The loop exit condition is satisfied.}
\EndIf
\EndWhile
\EndFunction
\end{algorithmic}
\label{alg:compute-mmc}
\end{algorithm}

The result of the algorithm is a collection of chains that are monotonic \wrt to the ordering. TRWS running on the respective ordering of nodes as introduced in~\cref{sec:update-types} can be viewed also as optimizing the dual decomposition with monotonic chains~\cite{kolmogorov2006convergent}. It can be shown that the number $\max(N_{\rm in}(u), N_{\rm out}(u)$ used to calculate weights in TRWS is exactly the number of different chains containing node $u$ for any collection of monotonic chains found as above. Hence such a collection natively represent subproblems associated with TRWS.

\subsection{Message Passing in Spanning Trees}
The hierarchical minorant for chains involves passing messages from the ends of the chain to the central nodes, as shown in~\ref{alg:dmm}. For trees, the process is similar. Messages are passed from the leaf nodes to the central nodes. The centroid of a tree of size $n$ is the node whose removal results in subtrees of size $\leq\floor{\frac{n}{2}}$. The central nodes of a tree are defined as nodes connected by an edge whose removal gives trees that are similar in length. One of the central nodes is always the tree-centroid. The other inode s selected keeping in mind minimum deviation between the different sub-trees that arise from the removal of this node. As the hierarchical minorant is recursive, the recursion is repeated with a subtree.

\subsection{Generation of Spanning Trees in TBCA}
For the static strategy, we compute a sequence of minimum weight spanning trees with the weights being the number of times an edge has already been included in a spanning tree. This weighing scheme ensures that un-sampled edges are prioritized in building spanning trees. The sampling is stopped when all the edges are covered. In the experiments (below) we observed that with the block update strategy that we chose, dynamic updates were not advantageous any more and performed slower overall.

\begin{algorithm}[h!] 
\caption{Compute Strictly Shortest Path}
\begin{algorithmic}[1]
\Function{$(\mathcal{C})$=computeSSP}{$\SG=(\SV,\SE)$,src}
\Comment{\emph{src} is the source node from which to grow the shortest path.}
\State Create Vertex Set $\SQ$ from graph $\SG$
\For{Each Vertex $v$ in $\SG$}
\State $dist[v]:=\infty$	\Comment{Set distance of all vertices to $\infty$}
\State $prev[v]:=UNDEFINED$ \Comment{Initialize all previous nodes to default value.}
\EndFor
\State $dist[src]=0$	\Comment{Distance from the source node to the source node is 0}
\While{$\SQ$ is not empty}
\State $u:=$ vertex in $\SQ$ with min $dist[u]$ 	\Comment{$u$ is assigned vertex in $\SQ$ with minimum value in $dist[$ $]$} 
\For{each neighbor $v$ of $u$}	\Comment{Only $v$ that are still in $\SQ$}
\State $alt:=dist[u]+1 $	\Comment{$alt$ is $dist[u]+length(u,v)$, which equals $1$}
\If{$alt<dist[v]$}	\Comment{If $dist[v]$ is greater than alt update distance}
	\State $dist[v]:=alt$
	\State $prev[v]:=u$
\ElsIf{$alt==dist[v]$}	\Comment{Condition for strictness of shortest path is violated}
\State	\texttt{break}
\EndIf
\EndFor
\EndWhile
\State Construct chain $\SC$ from $prev[$ $]$
\EndFunction
\end{algorithmic}
\label{alg:compute-sp}
\end{algorithm}

\section{Detailed Experimental Results}\label{suppl:experiments}

We show in~\cref{fig:suppl-plate} results per individual application, with performance in both messages and time. Since in each application, there are still multiple instance, we apply the same normalization and averaging procedures as in the main paper.

\begin{figure*}[t]
\centering
\def\fscale{0.14}
\setlength{\tabcolsep}{2pt}
\renewcommand\arraystretch{1.1}
\begin{tabular}{|c c c|}
\hline
& \texttt{stereo} & \texttt{mrf-inpainting} \\
\parbox[t]{2mm}{\multirow{2}{*}{\rotatebox[origin=c]{90}{\bf Sparse}}}&
\includegraphics[scale=\fscale]{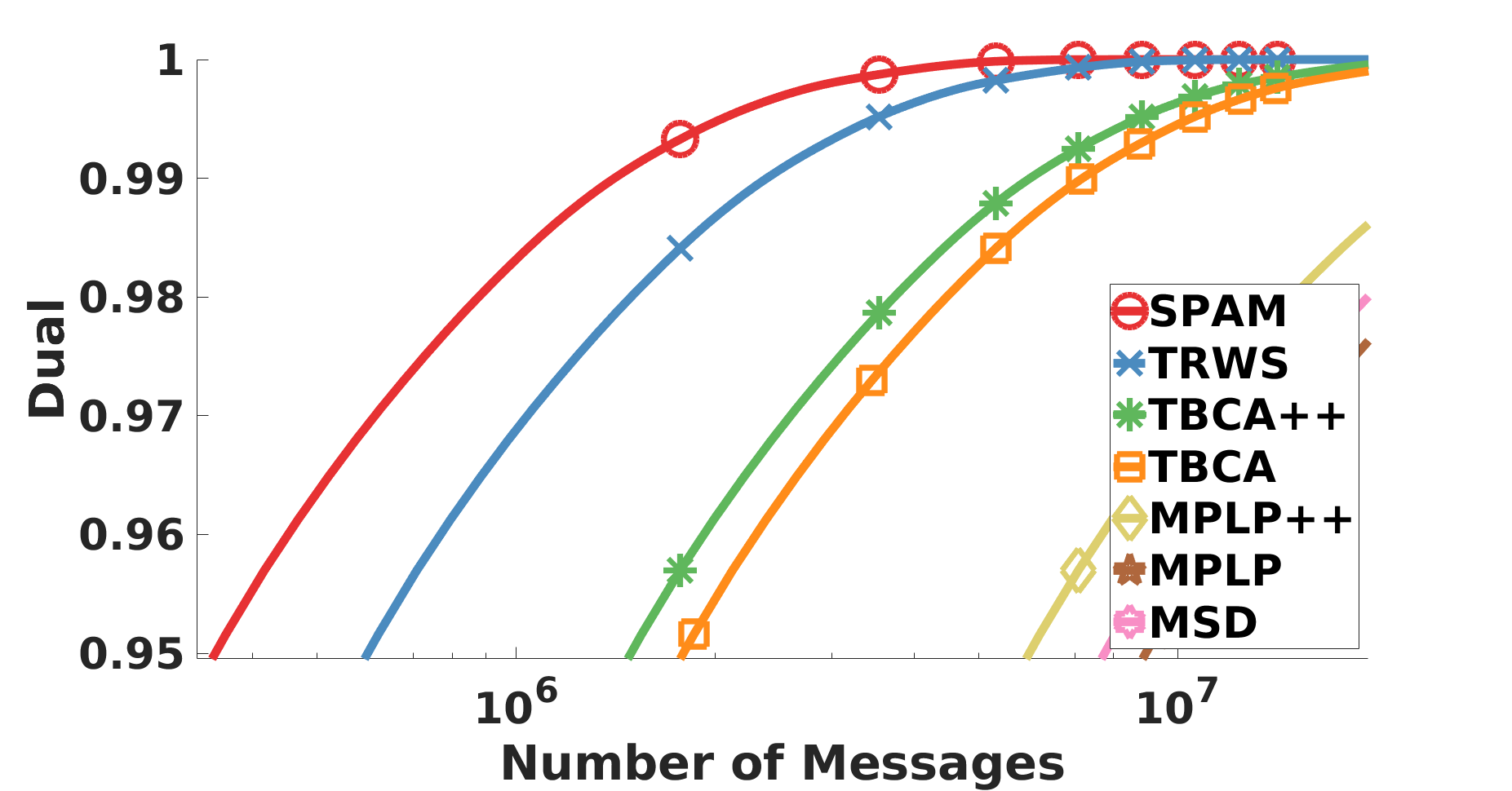}&%
\includegraphics[scale=\fscale]{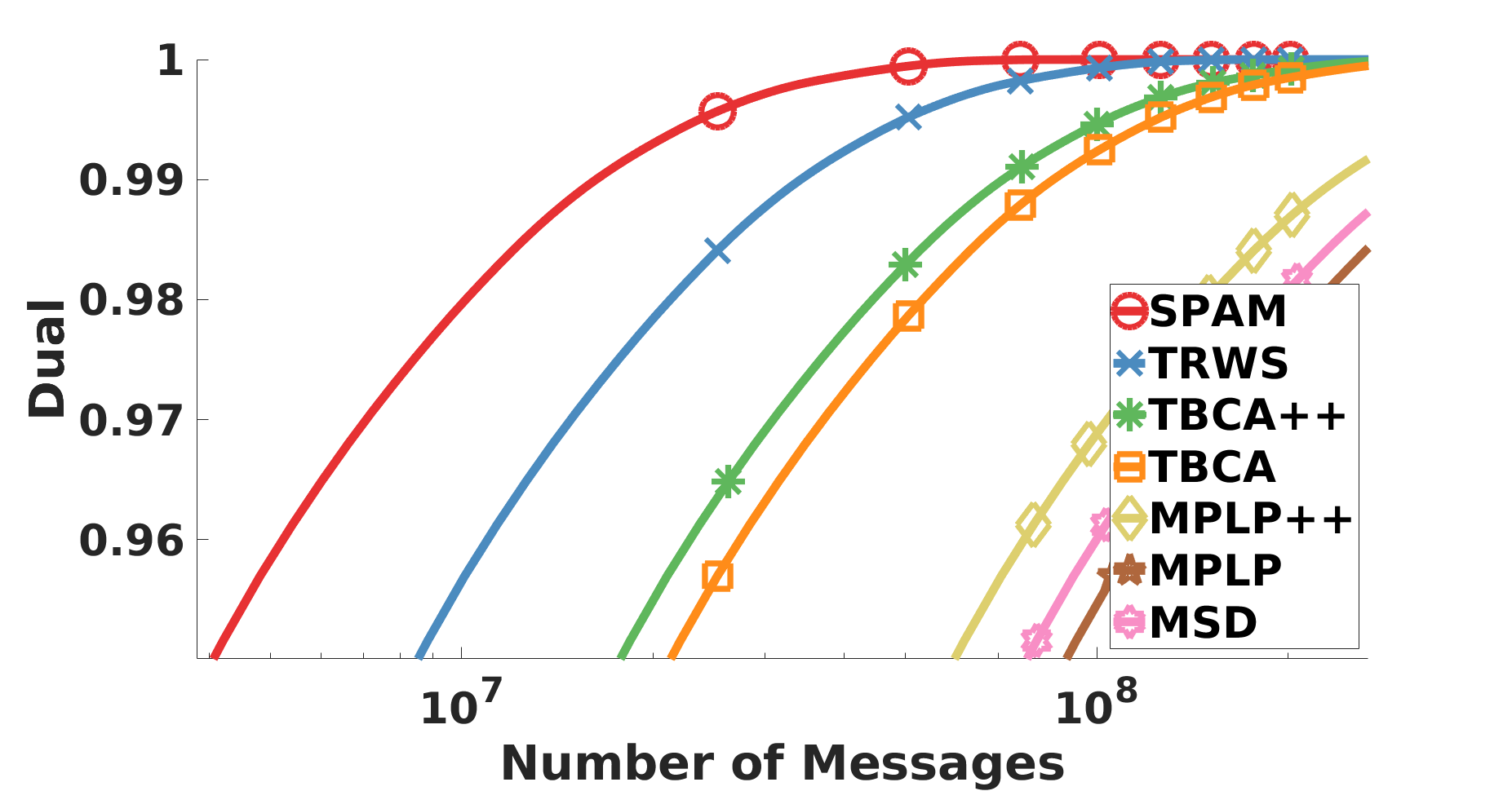}%
\\
& \includegraphics[scale=\fscale]{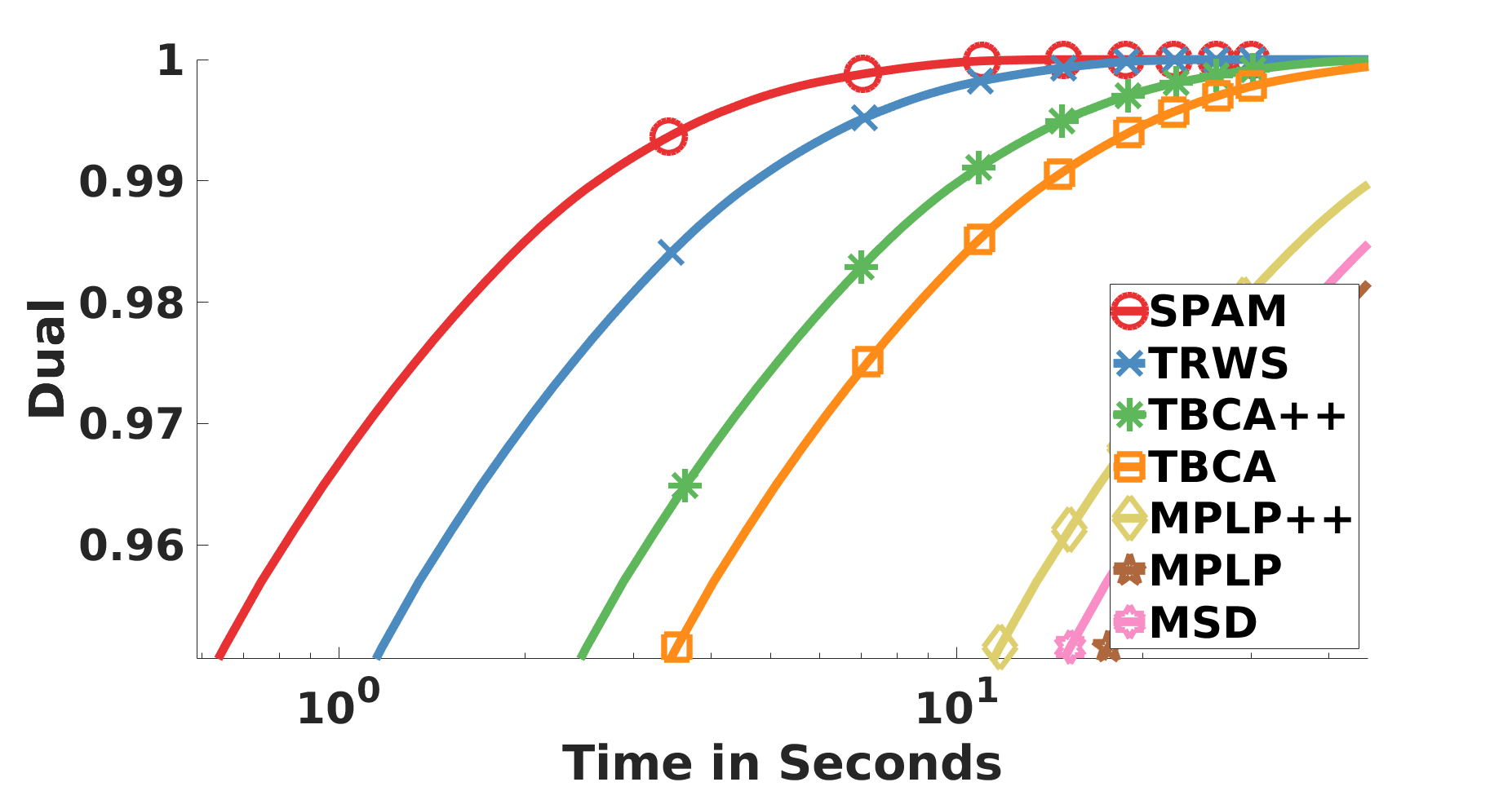}&
\includegraphics[scale=\fscale]{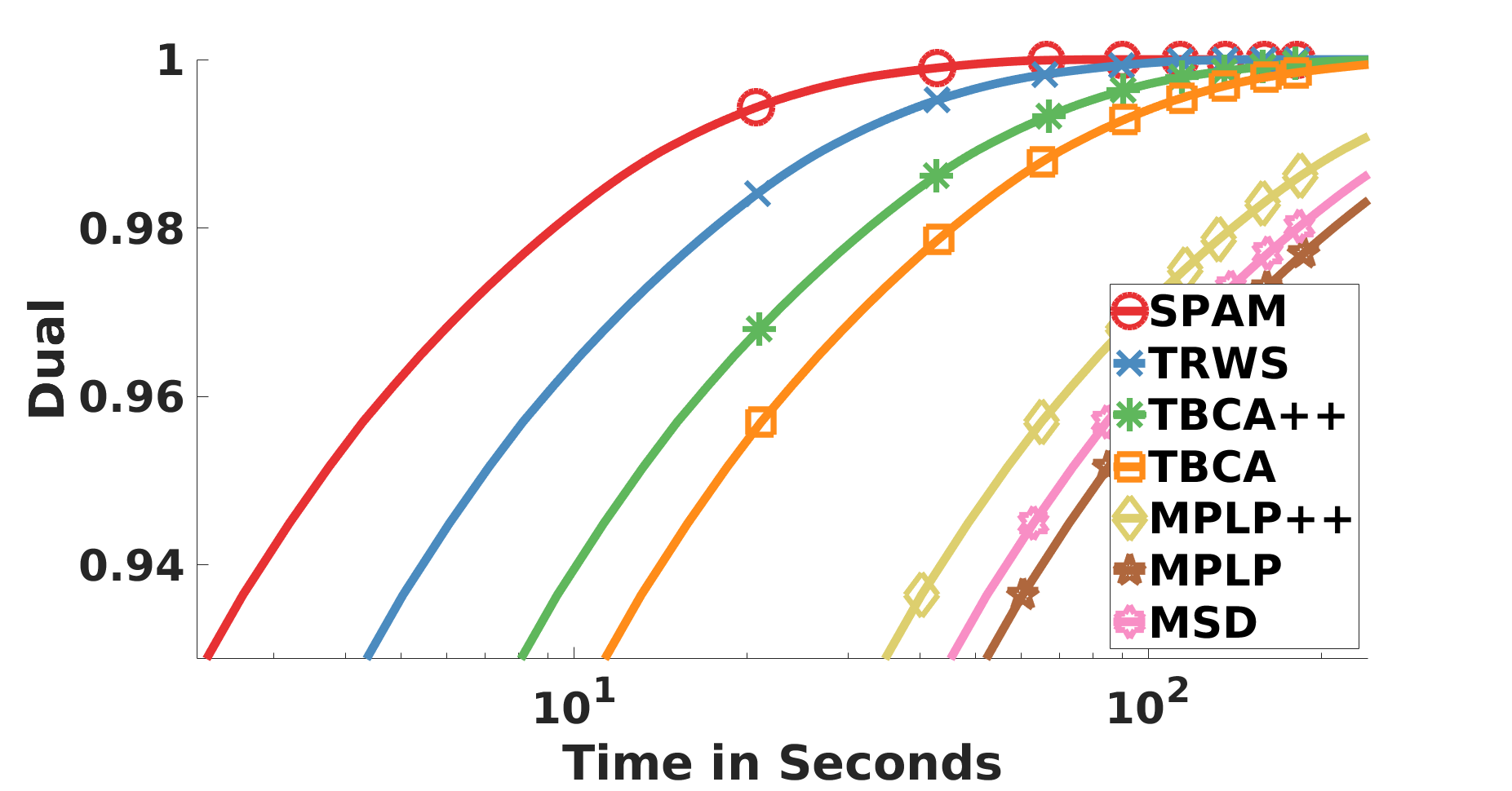}%
\\
\hline
& \texttt{denser-stereo} & \texttt{worms} \\
\parbox[t]{2mm}{\multirow{2}{*}{\rotatebox[origin=c]{90}{\bf Denser}}}&
\includegraphics[scale=\fscale]{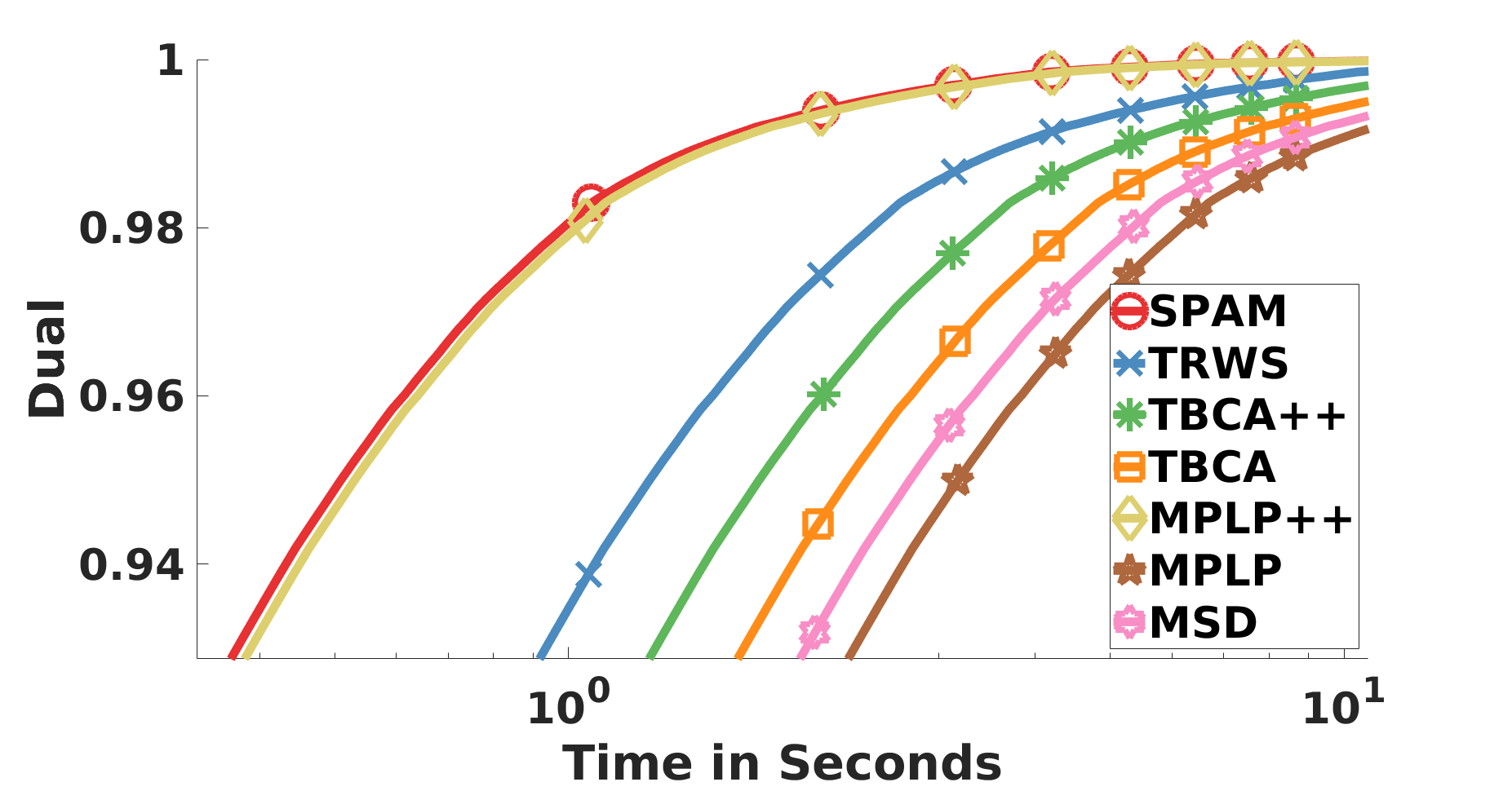}&%
\includegraphics[scale=\fscale]{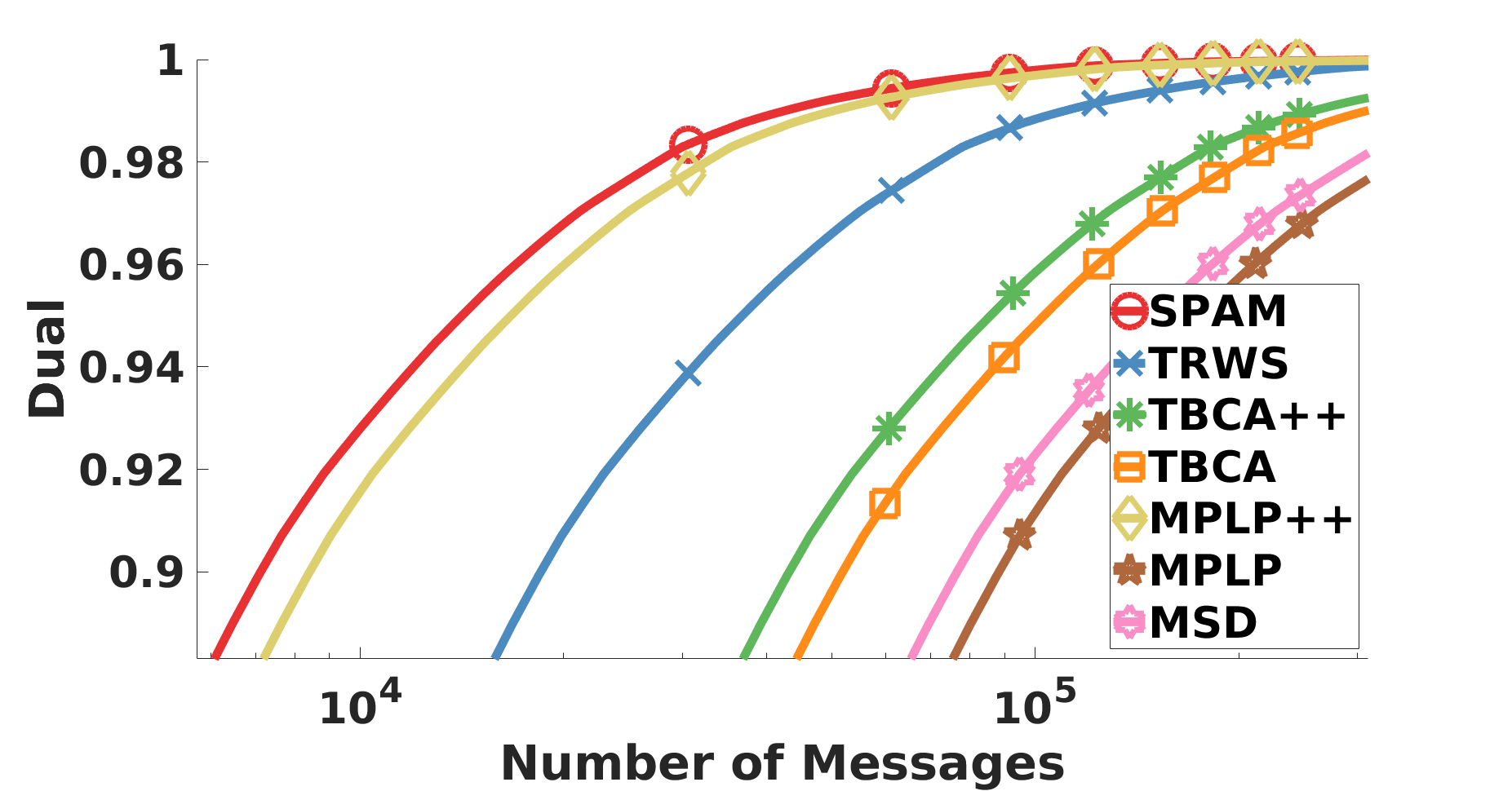}%
\\
& \includegraphics[scale=\fscale]{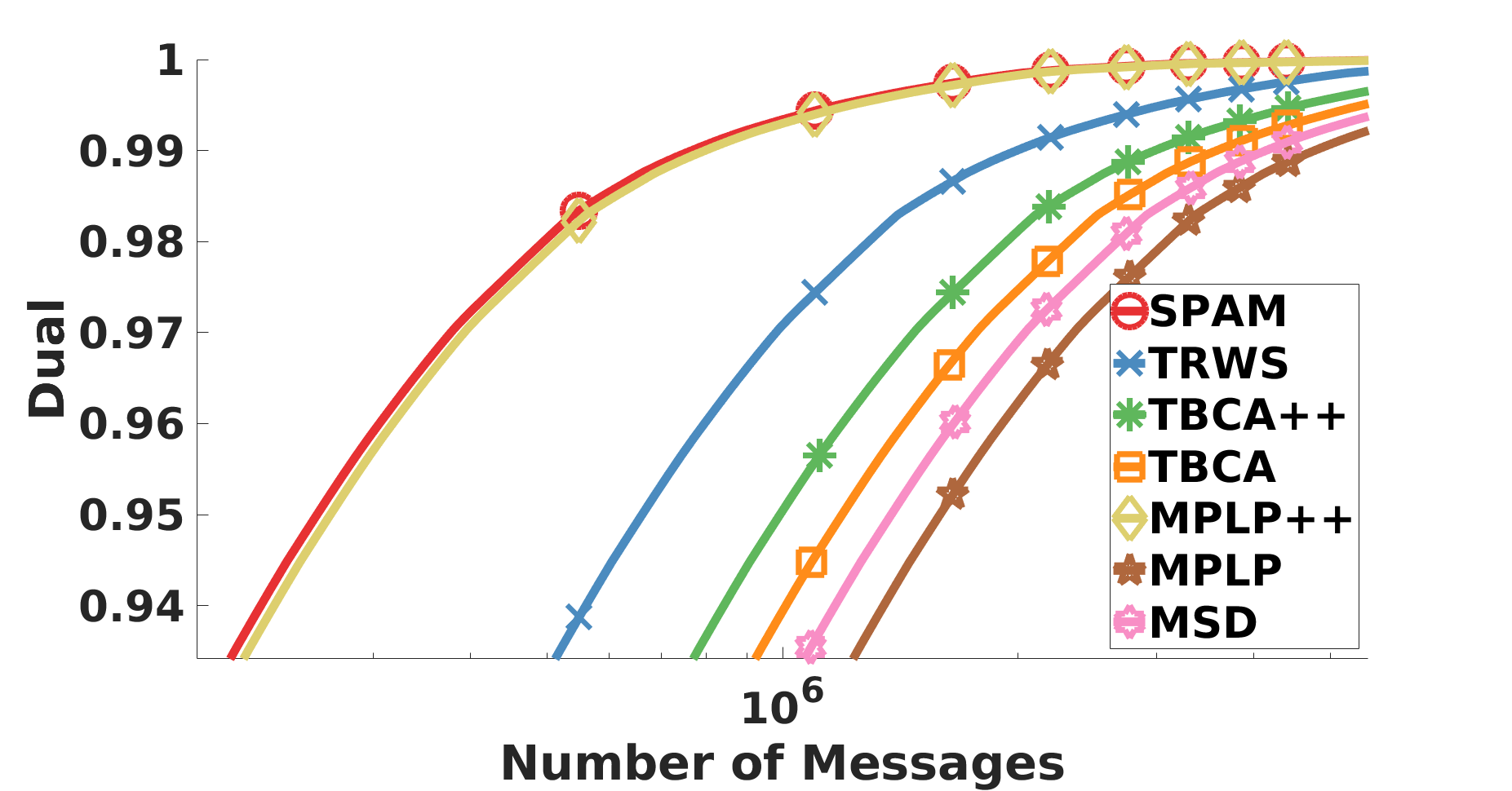}&%
\includegraphics[scale=\fscale]{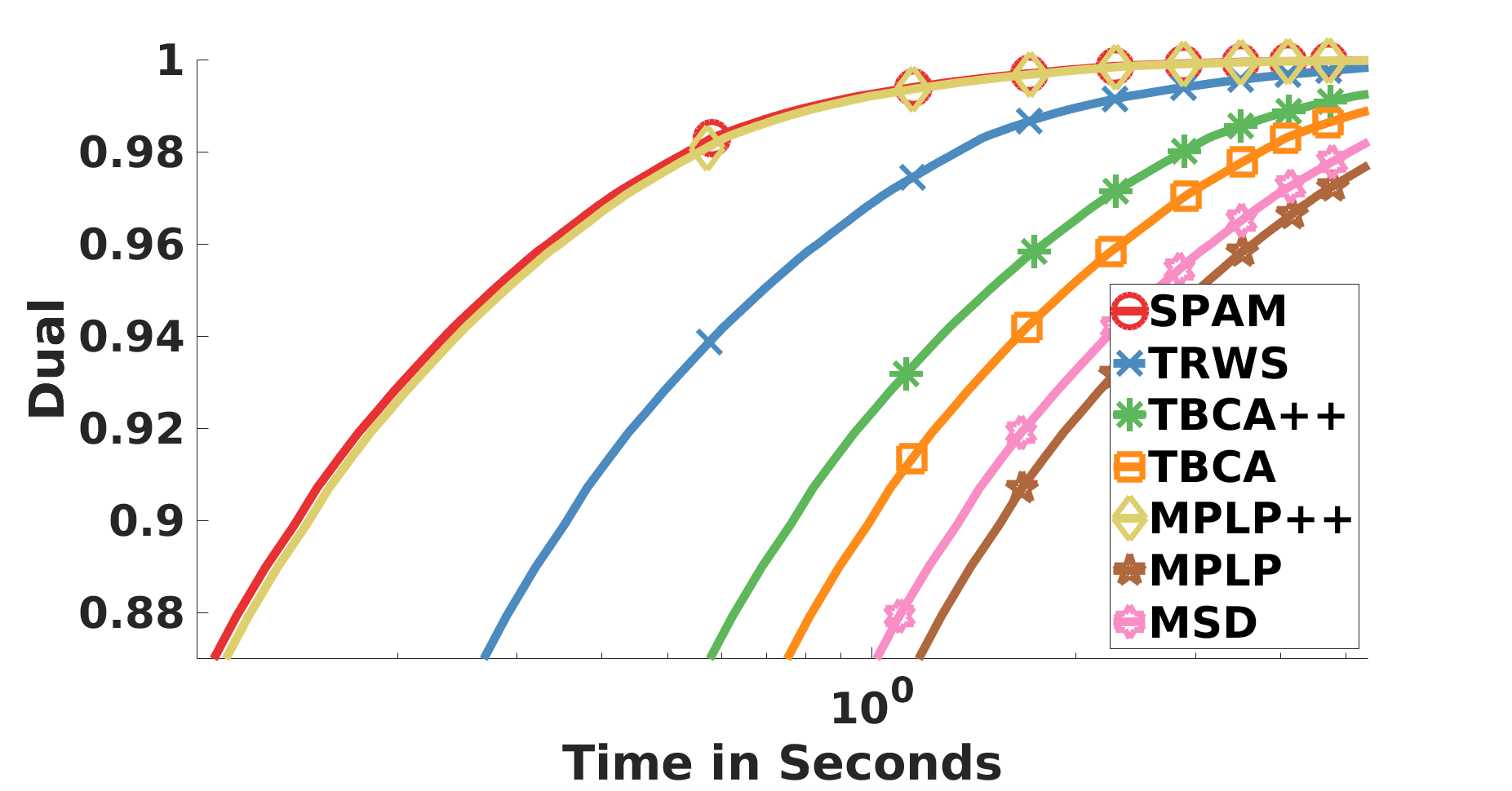}%
\\
\hline
& \texttt{protein} & \texttt{pose} \\
\parbox[t]{2mm}{\multirow{2}{*}{\rotatebox[origin=c]{90}{\bf Complete}}}&
\includegraphics[scale=\fscale]{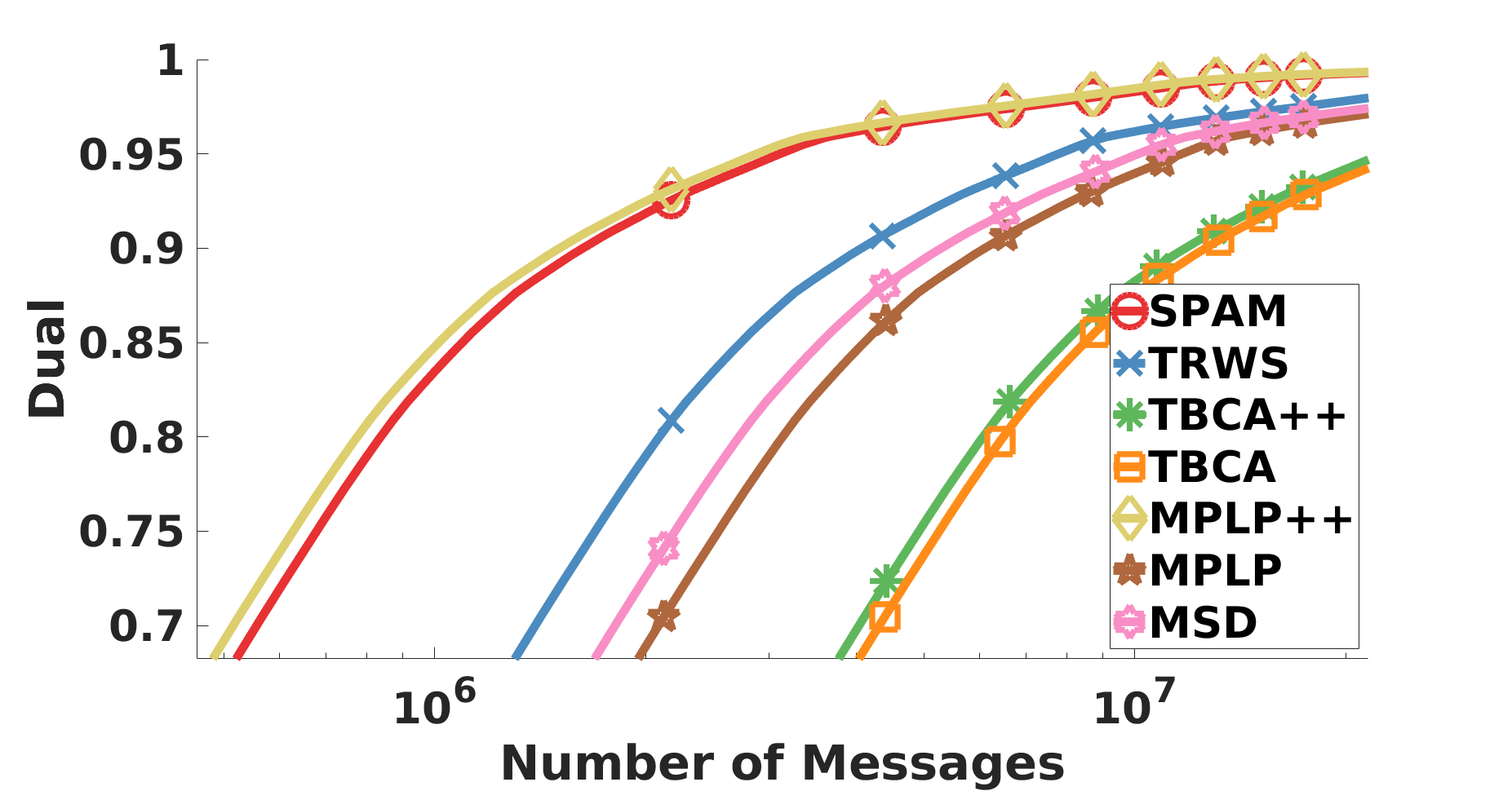}&%
\includegraphics[scale=\fscale]{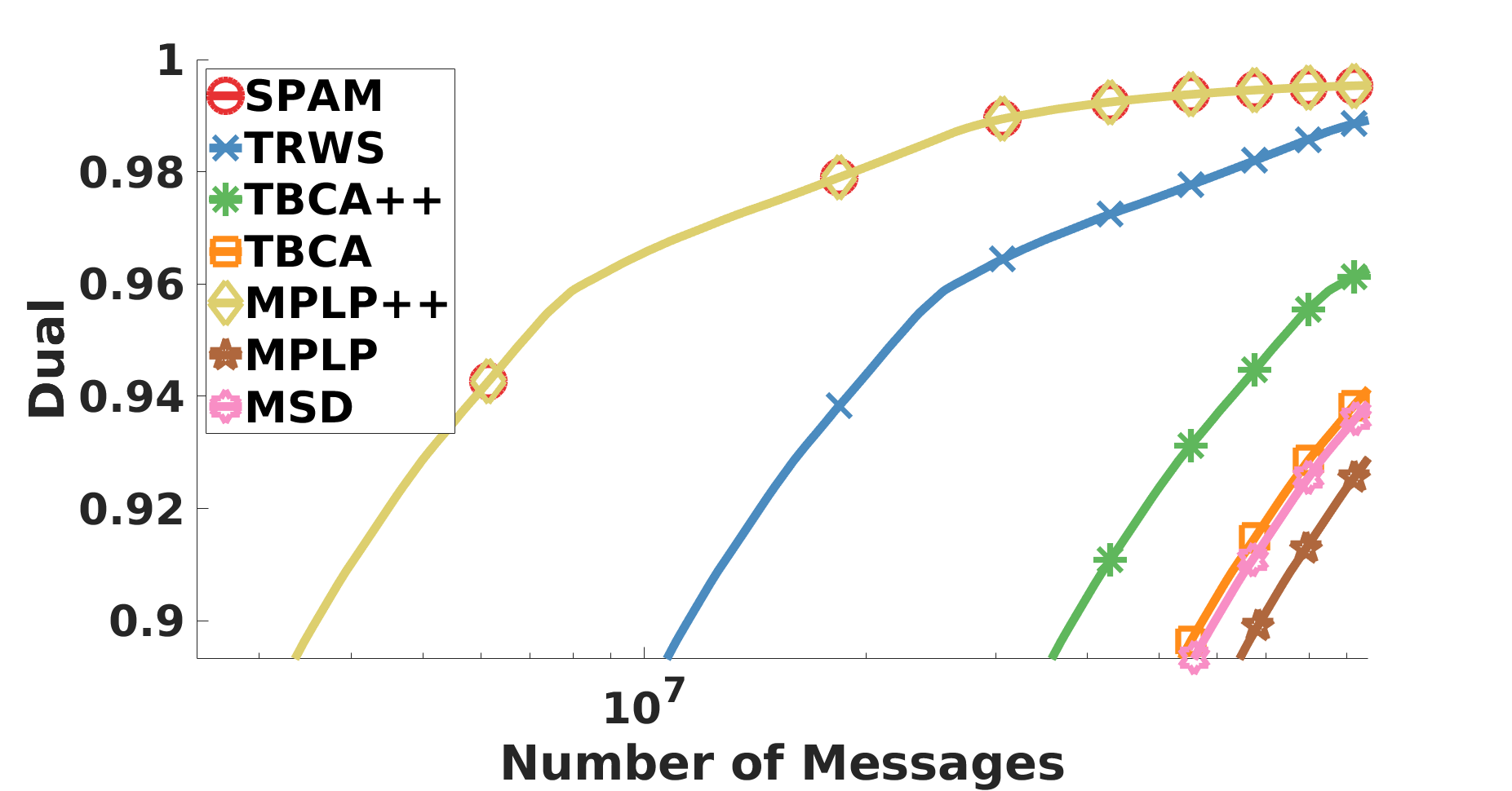}%
\\
& \includegraphics[scale=\fscale]{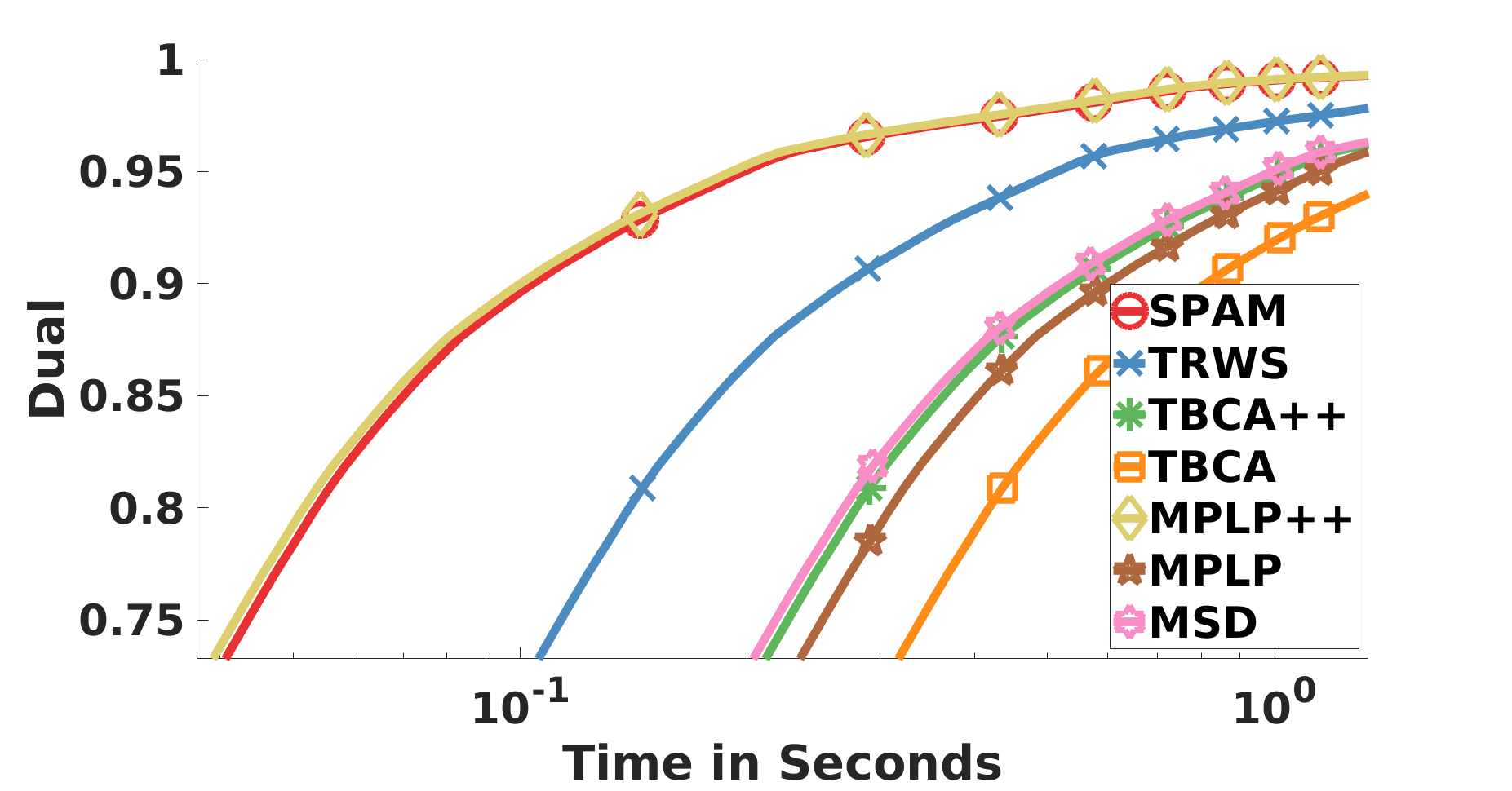}&
\includegraphics[scale=\fscale]{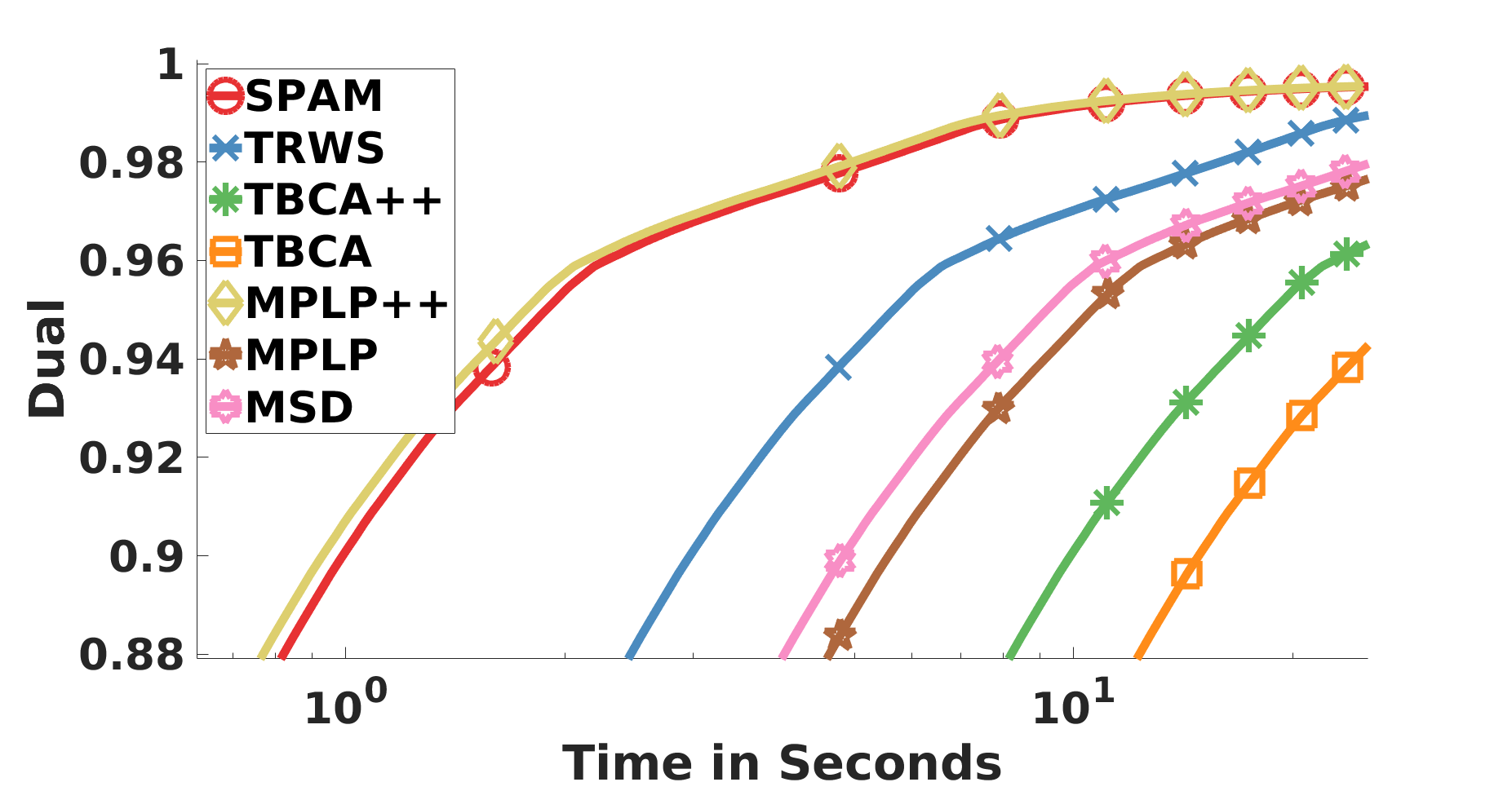}\\
\hline
\end{tabular}
\caption{The averaged plots for application-specific datasets: messages and time.\label{fig:suppl-plate}}
\end{figure*}

\begin{figure*}[t]
\centering
\def\fscale{0.14}
\setlength{\tabcolsep}{2pt}
\begin{tabular}{|c c|}
\hline
\text{\SPAM/\TRWS on sparse graphs} & \text{\SPAM/\TRWS on denser graphs}\\ 
\includegraphics[scale=\fscale]{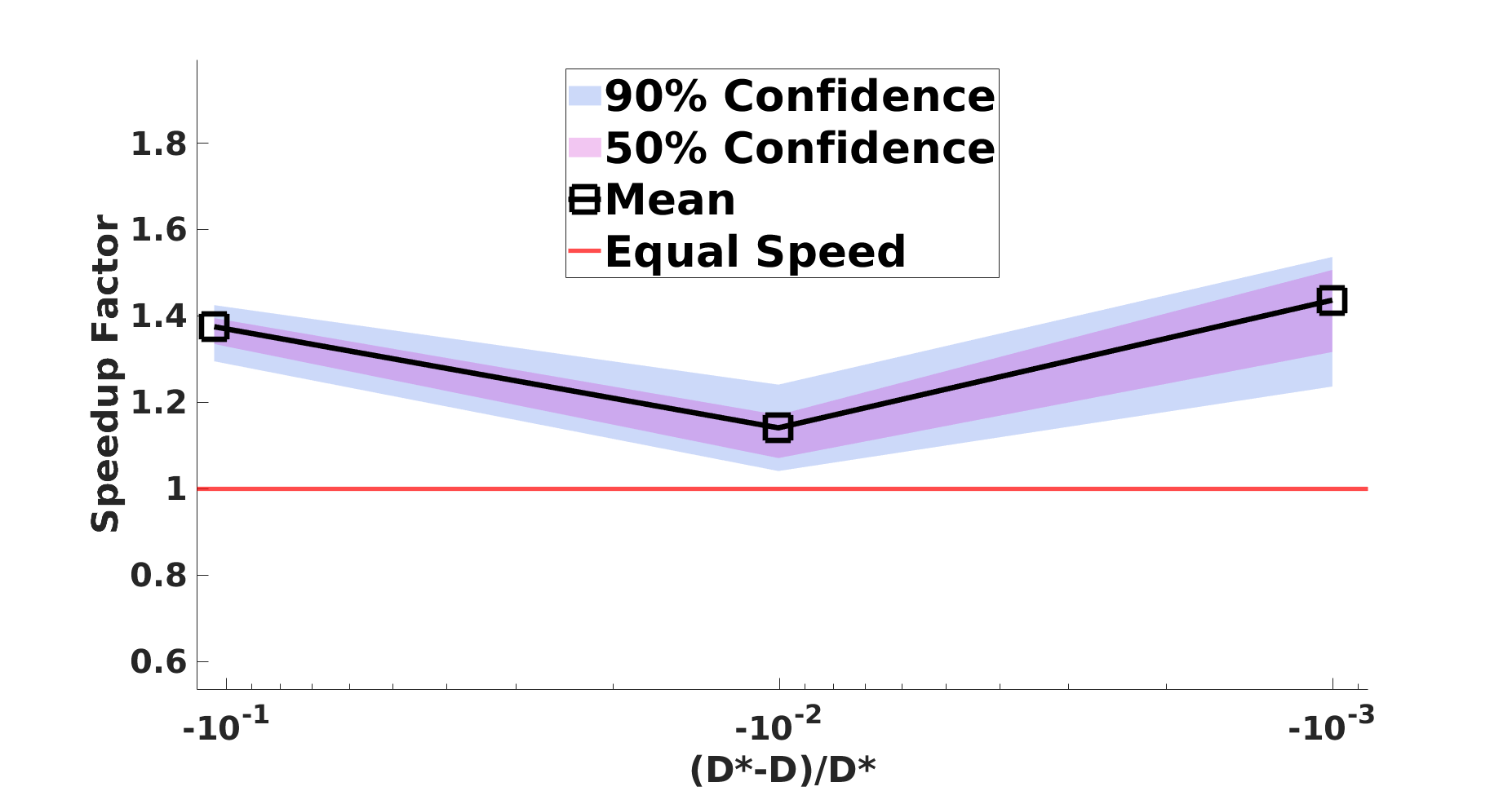}&%
\includegraphics[scale=\fscale]{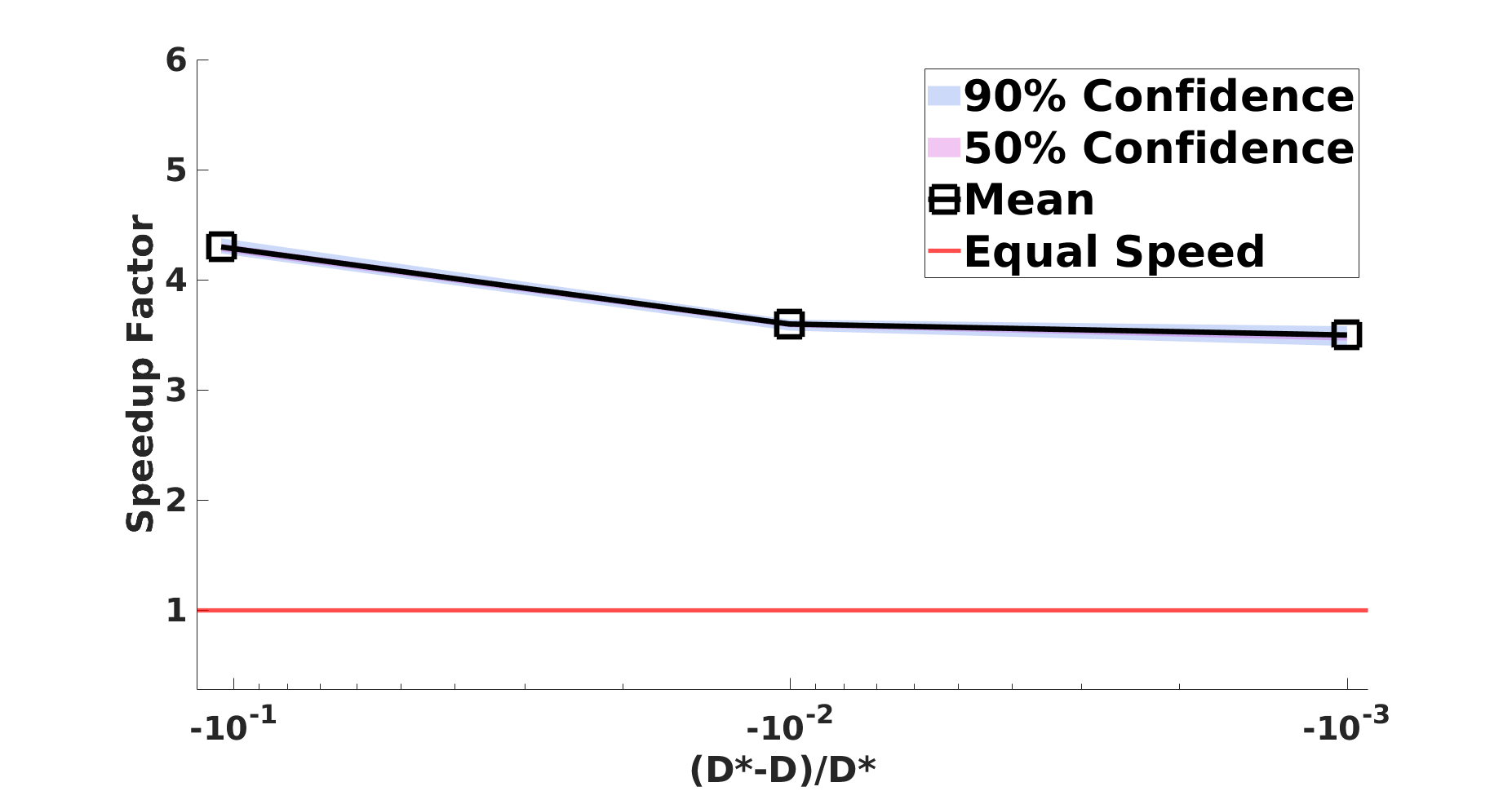}%
\\
\hline
\text{\SPAM/\TRWS on complete graphs} & \text{\SPAM/\MPLPPP on sparse graphs}\\ 
\includegraphics[scale=\fscale]{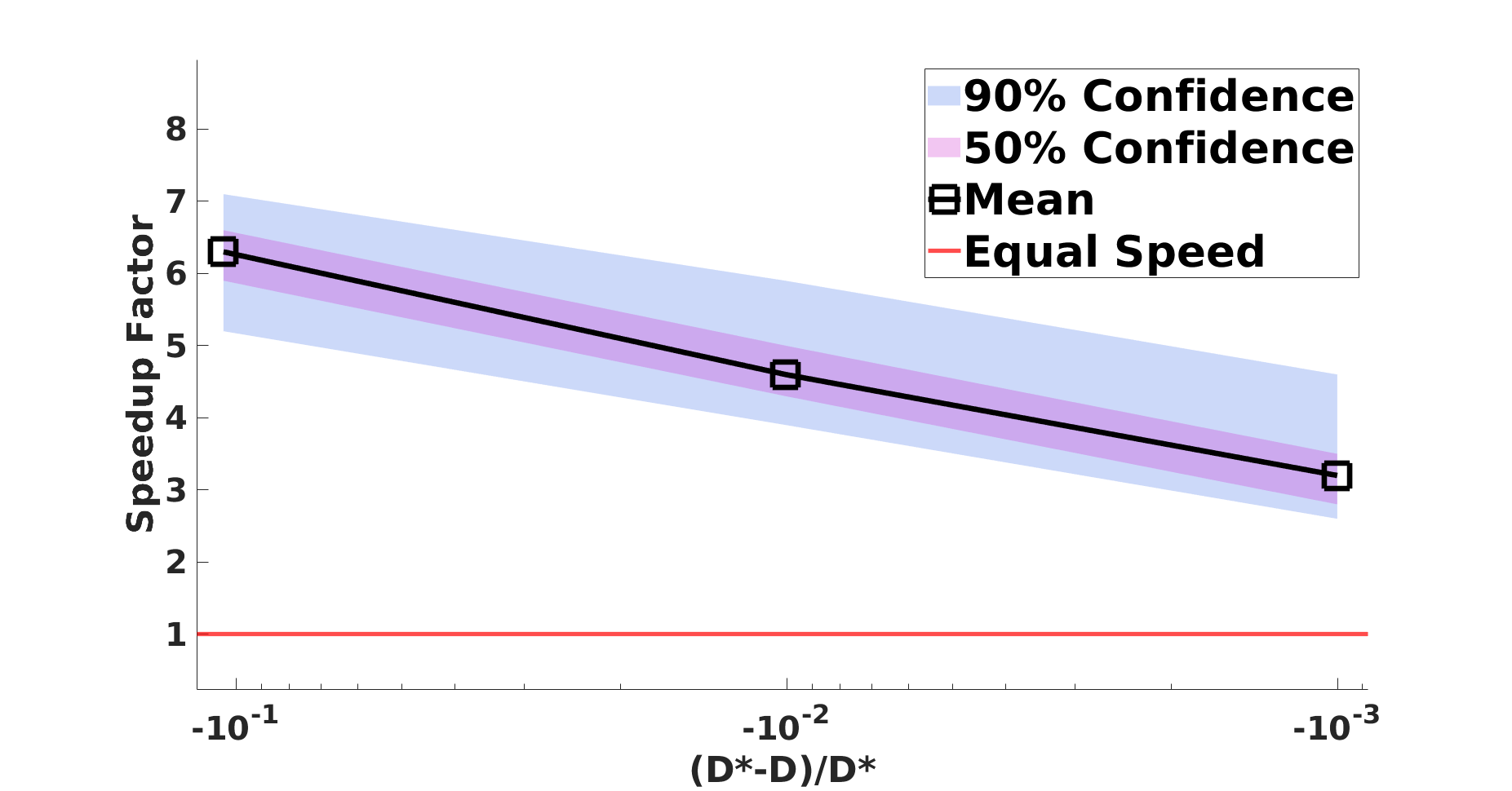}&%
\includegraphics[scale=\fscale]{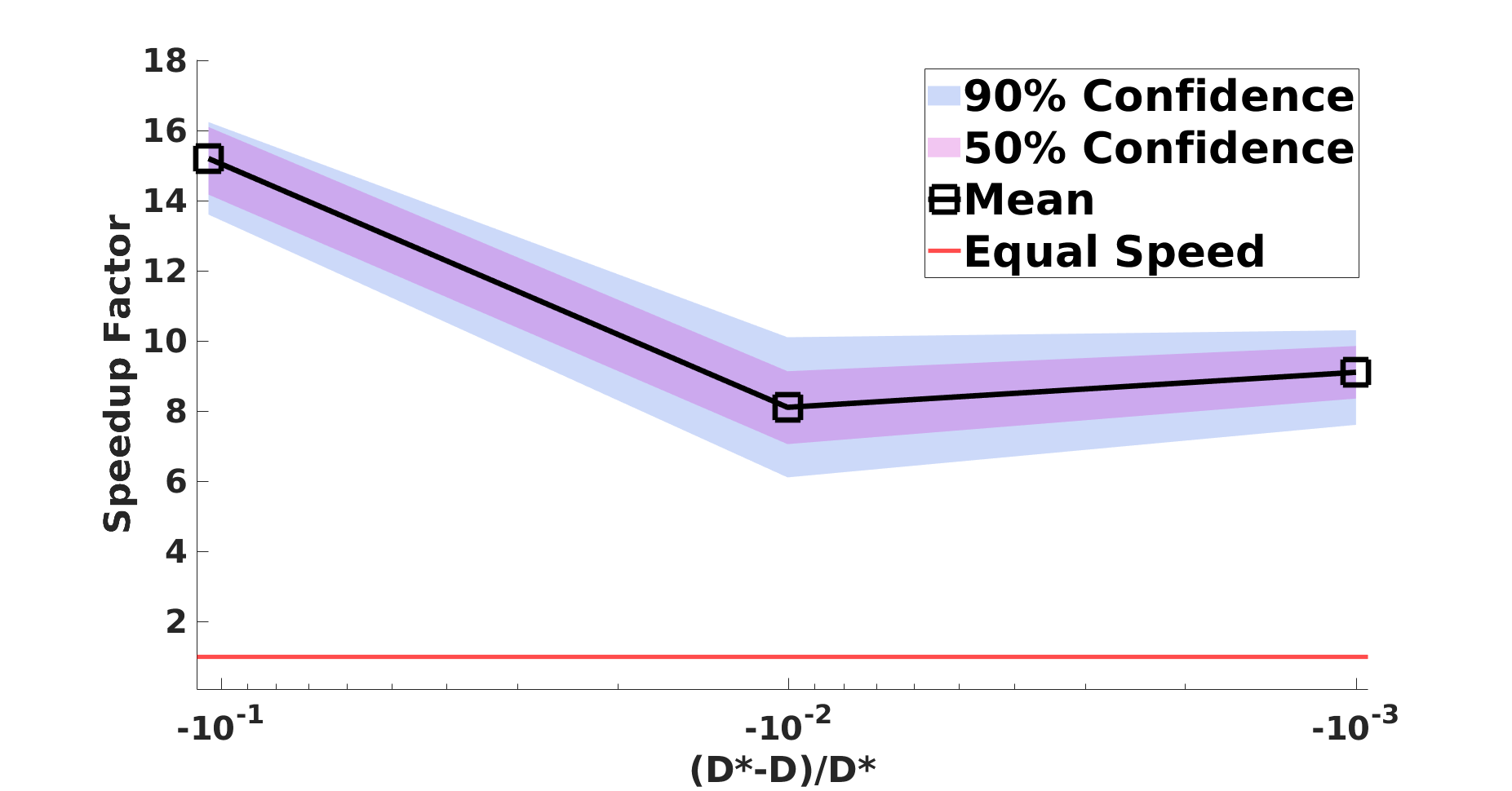}%
\\
\hline
\end{tabular}
\caption{\label{fig:conf-interval}
Speed-up factors of \SPAM \wrt \TRWS and \MPLPPP with confidence intervals for the different datasets.
The $x$-axis shows the normalized dual value and the $y$-axis the speed-up to achieve the same dual. The statistics are computed over all instances in a dataset. We show asymmetric confidence intervals with the equal percentage around the mean.
}
\end{figure*}

\end{document}